\documentclass{article}

\usepackage[preprint]{neurips_2026}

\usepackage[utf8]{inputenc}
\usepackage[T1]{fontenc}
\usepackage{hyperref}
\usepackage{url}
\usepackage{booktabs}
\usepackage{amsfonts}
\usepackage{amsmath}
\usepackage{amssymb}
\usepackage{amsthm}
\usepackage{nicefrac}
\usepackage{microtype}
\usepackage{xcolor}
\usepackage{graphicx}
\usepackage{subcaption}
\usepackage{multirow}
\usepackage{algorithm}
\usepackage{algorithmic}

\newtheorem{proposition}{Proposition}

\newtheorem{observation}{Empirical Observation}

\title{Two-View Accumulation as the Primary Training Lever for Hybrid-Capture Gaussian Splatting:\\A Variance-Decomposition View of When Gradient Surgery Helps}

\author{%
  Sungjun Cho \\
  The Hong Kong University of Science and Technology \\
  \texttt{schoaq@connect.ust.hk}
}

\begin{document}

\maketitle

\begin{abstract}
Hybrid-capture novel view synthesis combines images at substantially different
camera distances (e.g., aerial drone and ground-level views). Standard 3D
Gaussian Splatting (3DGS), trained for 30K iterations with one rendered view
per optimizer step, under-fits the minority regime by 1--3\,dB on five
hybrid-capture benchmarks. We isolate the lever that closes this gap.

Among compute-matched alternatives---vanilla 60K iterations, magnitude
corrections (GradNorm), direction-aware near/far gradient surgery, projective
preconditioning, confidence-gated sample-level surgery, and a random
two-view-per-step control---the simplest \emph{structural} change wins: rendering
two views per optimizer step. The pairing rule (geometry-defined near/far,
random, or active loss-disparity) does not change PSNR beyond seed variance on
any of the five scenes; the structural change of having two views per step
does. We propose a variance-decomposition framework that predicts and explains
this finding: under bimodal camera regimes, between-regime gradient variance
turns out to be small relative to within-regime variance in 3DGS, so structured
and random pairings are variance-equivalent in expectation, and the variance
halving from two-view accumulation itself is the dominant effect. We verify the
framework on five scenes whose camera-altitude bimodality coefficients span
$[0.55, 1.00]$, and we report the negative result that direction-aware
projection, magnitude correction, confidence gating, and an active
loss-disparity pairing all fall within seed variance of random two-view
pairing. The two-view structural lever transfers cleanly to the Scaffold-GS and
Pixel-GS backbones. We position this work as an honest characterization of
which training-side axes do and do not move PSNR for hybrid-capture 3DGS,
together with the framework that explains why.
\end{abstract}

\section{Introduction}
\label{sec:introduction}

Hybrid-capture novel view synthesis is increasingly important for large-scale
scene reconstruction, digital twins, aerial--ground mapping, and autonomous
navigation. In this setting, ground-level cameras capture local texture and fine
geometric detail, while aerial cameras provide broad spatial coverage and global
context~\citep{horizongs,zhang2024ucgs,li2023matrixcity}. Unlike conventional
NVS settings where cameras observe a scene from relatively similar distances,
hybrid-capture scenes require a single radiance representation to serve views
taken from substantially different distances. This makes hybrid capture not
merely a multi-scale rendering problem, but a multi-regime optimization problem:
the same scene parameters must be updated by near and far views that impose
different, and sometimes incompatible, training signals.

\begin{figure}[t]
\centering
\includegraphics[width=0.9\linewidth]{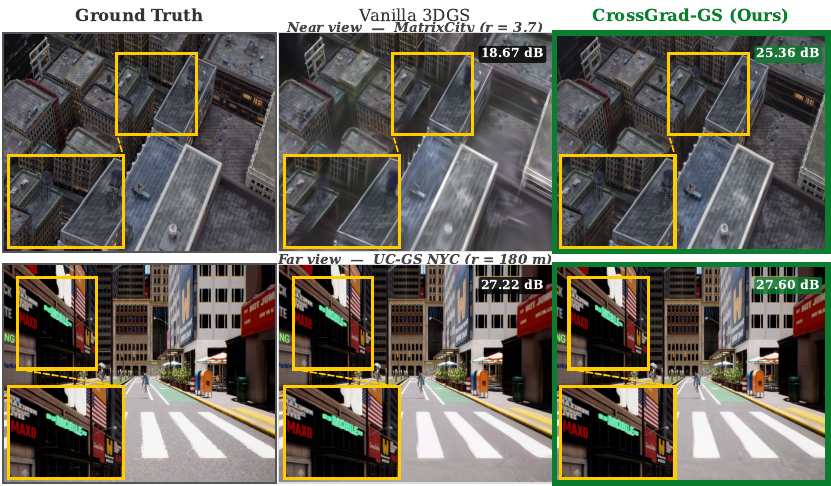}
\caption{
Hybrid-capture scenes require one Gaussian representation to support near views
demanding high-frequency detail and far views demanding global consistency.
Vanilla 3DGS merges both regimes through a single averaged gradient, biasing
shared parameters toward one capture mode. \textbf{CrossGrad-GS} balances
near/far training signals and preserves structure at both distances without
changing the Gaussian representation.
}
\label{fig:teaser}
\end{figure}

3D Gaussian Splatting (3DGS)~\citep{kerbl2023gaussian} has enabled real-time
novel view synthesis with high visual fidelity in standard capture settings.
However, in hybrid-capture scenes, vanilla 3DGS often collapses toward one
capture regime. Near views demand sharp local texture, precise projected
geometry, and high-frequency detail, whereas far views favor spatially averaged
appearance and stable global structure. Standard 3DGS optimizes all cameras
through a single averaged photometric loss, implicitly merging these
heterogeneous updates before the optimizer can distinguish whether they are
compatible or conflicting. As a result, the shared Gaussian parameters may be
biased toward the dominant capture regime, leading to over-smoothed aerial
renderings, incomplete street-level reconstructions, or severe degradation under
large altitude gaps.

A natural response is to improve the representation, rasterizer, or training
system. Recent Gaussian-splatting methods introduce structured
anchors~\citep{lu2024scaffold}, level-of-detail
hierarchies~\citep{ren2024octreegs}, and anti-aliasing mechanisms for
rendering-time scale variation~\citep{yu2024mip,liang2024analytic}. Hybrid
Gaussian representations further improve the representational trade-off between
geometry and appearance. These methods improve what Gaussians can represent and
how they are rendered across scales. Nevertheless, they leave a complementary
question largely unexplored: \emph{how should heterogeneous near and far views
jointly update the same Gaussian parameters during training?} In hybrid-capture
scenes, failure can arise even when the Gaussian representation is expressive
enough, because incompatible near/far gradients are aggregated into a single
update on shared parameters.

In this work, we identify hybrid-capture 3DGS as a geometry-induced gradient
aggregation problem: near and far camera populations act as structured
optimization groups whose gradients are coupled through the same Gaussian
parameters. Under perspective projection, views captured at different distances
induce gradients with different scaling behavior across Gaussian parameter
types. Projected geometric parameters such as position and scale are expected to
be more sensitive to distance-induced imbalance than amplitude-like parameters
such as opacity and color. Moreover, the problem is not only one of gradient
magnitude. Near and far views can also disagree in gradient direction: a near
view may push a Gaussian to explain local high-frequency detail, while a far
view may push the same Gaussian toward smoother global consistency. Standard
averaged-gradient training hides this directional conflict, allowing destructive
cross-regime updates to accumulate throughout optimization.

Motivated by this diagnosis, we propose \textbf{CrossGrad-GS}, a simple
gradient-balanced training rule for hybrid-capture Gaussian splatting.
CrossGrad-GS keeps the underlying Gaussian representation and rasterizer
unchanged. Each training iteration samples near and far cameras in a balanced
manner, computes their gradients separately, and applies a symmetric
cross-altitude gradient projection when the two updates conflict. This removes
destructive components between near/far gradients while preserving compatible
updates, thereby balancing the optimization signal without introducing a new
Gaussian representation, rendering primitive, or scene-specific architecture.

Our experiments also clarify the boundary of this approach. CrossGrad-GS is most
effective when the near/far grouping captures the true visual-scale regimes of
the scene; when a radial split poorly matches an anisotropic camera layout, as
in UC-GS SF, the primary training-only method can lose its advantage. We
therefore view grouping as a central design choice rather than an implementation
detail. The main contribution of CrossGrad-GS is not that its projection
operator dominates all existing gradient-reconciliation methods, but that
hybrid-capture 3DGS exposes a useful geometry-defined gradient aggregation axis:
direction-aware near/far aggregation consistently explains the strongest gains
better than scalar magnitude correction.

\paragraph{Contributions.}
\textbf{(i) Diagnostic.} Across five hybrid-capture benchmarks, we measure
33--83\% per-block sample-level gradient conflict between near and far camera
populations on shared Gaussian parameters during single-view 30K training,
establishing the conflict regime as a measurable property of hybrid-capture
3DGS rather than a modelling assumption.
\textbf{(ii) Identification.} A second view per optimizer step---the simplest
possible structural change---closes 80--100\% of the PSNR gap to compute-matched
60K vanilla on four of five scenes, and lifts MatrixCity by an additional
1.07\,dB. This holds across pairing rules: structured near/far, random, and
active loss-disparity pairings are statistically indistinguishable in PSNR
on every benchmark we evaluate.
\textbf{(iii) Variance-decomposition framework.} We decompose per-view gradient
variance into within-regime ($\sigma^2_{\mathrm{w}}$) and between-regime
($\sigma^2_{\mathrm{b}}$) components and prove that structured-versus-random
pairing variance differs by a factor $1+\sigma^2_{\mathrm{b}}/\sigma^2_{\mathrm{w}}$
(Proposition~\ref{prop:variance_decomp}). The empirical PSNR equivalence of all
two-view pairings then implies $\sigma^2_{\mathrm{b}}\!\ll\!\sigma^2_{\mathrm{w}}$
in 3DGS, even on scenes with strongly bimodal camera altitudes (Sarle's BC
$>$ 0.95 on MatrixCity, Road, and Park).
\textbf{(iv) Negative results, honestly reported.} Direction-aware projection
(CrossGrad-GS), magnitude-only correction (GradNorm), MGDA/CAGrad,
projective preconditioning, confidence-gated sample-level surgery, anchor
grouping, and active loss-disparity pairing all fall within seed variance of a
random two-view control on PSNR. We document these as informative negatives
that delineate which training-side axes do, and do not, move PSNR for
hybrid-capture 3DGS.

\section{Related Work}
\label{sec:related}

Prior work on Gaussian splatting has improved novel view synthesis from several
complementary directions, including representation design, scale-aware
rendering, large-scale scene decomposition, density control, geometry-aware
regularization, and training regularization. We review these directions through
the lens of hybrid-capture optimization and highlight how CrossGrad-GS targets a
different axis: the aggregation of near- and far-view gradients on shared
Gaussian parameters.

\paragraph{Representation, rendering, and large-scale 3DGS.}
A large body of work improves what Gaussians can represent or how they are
rendered. Structured anchors~\citep{lu2024scaffold}, level-of-detail
hierarchies~\citep{ren2024octreegs,yan2023multiscalegs}, and scale-aware
anti-aliasing methods~\citep{yu2024mip,liang2024analytic} improve rendering
quality across different projected scales. Large-scale systems further use
hierarchical decomposition, partitioned optimization, or appearance modeling to
scale 3DGS to city-scale scenes~\citep{kerbl2024hierarchical,
lin2024vastgaussian,liu2024citygaussian,kulhanek2025lodge,
kulhanek2024wildgaussians}. Hybrid Gaussian representations, including mixed
2D/3D formulations~\citep{eggs,hybridgs}, improve the representational
trade-off between geometry and appearance. Geometry-aware approaches such as
GeoGaussian~\citep{geogaussian} improve surface or geometric regularity through
representation-side or regularization-side design. These methods are
complementary to CrossGrad-GS: they improve the representation, rasterizer,
geometry regularity, or system scalability, whereas our work modifies how
heterogeneous camera views update the same Gaussian parameters during training.

\paragraph{Hybrid-capture and heterogeneous-capture Gaussian splatting.}
Recent work has begun to study Gaussian splatting under aerial--ground or
varying-altitude capture. UC-GS~\citep{zhang2024ucgs} and
Horizon-GS~\citep{horizongs} introduce benchmarks and methods for
hybrid-capture scenes, documenting severe degradation of vanilla 3DGS under
large altitude gaps. Horizon-GS mitigates hybrid-capture degradation through
staged training and camera exposure balancing, whereas CrossGrad-GS explicitly
separates near- and far-view gradients and modifies their aggregation when their
update directions conflict. Cross-view systems such as
CrossView-GS~\citep{crossviewgs} address large aerial--ground viewpoint changes
through branch construction, fusion, supplementation, or additional
regularization. Urban hybrid Gaussian methods modify the representation or
optimization stack to handle heterogeneous capture: HO-Gaussian~\citep{li2024hogaussian}
hybridizes 3DGS optimization for urban-scale scenes, and HGS-mapping~\citep{wu2024hgsmapping}
uses a hybrid Gaussian representation for online dense mapping in urban scenes.
Heterogeneous sensor systems such as TCLC-GS~\citep{tclcgs} combine LiDAR and
camera observations with different spatial support and supervision signals.
These works improve architectures, priors, schedules, or multi-sensor
supervision. CrossGrad-GS is complementary: it keeps a single shared Gaussian
field unchanged and studies the training signal itself, namely how
heterogeneous camera populations should aggregate gradients when they update
the same parameters.

\paragraph{Density control and gradient-based training analyses.}
Several recent works analyze gradients in 3DGS, but at different points in the
training pipeline. AbsGS~\citep{absgs} studies gradient collision in adaptive
density control, where pixel-wise view-space positional gradients can cancel and
prevent large Gaussians from splitting in high-detail regions. DC4GS~\citep{dc4gs}
uses directional consistency of positional gradients to improve primitive
splitting and placement. Revising Densification~\citep{revisingdensification}
studies how Gaussian evolution and density-control rules affect training
stability and reconstruction. These methods modify density control: they decide
which Gaussians should be split, cloned, pruned, or refined. CrossGrad-GS
addresses a different source of conflict. We leave densification and pruning
unchanged, and instead modify the training-time aggregation rule for near- and
far-view gradients before they update shared Gaussian parameters. Thus, our
method is complementary to gradient-based density-control improvements.

\paragraph{Gradient reweighting and gradient surgery.}
Multi-task optimization methods such as PCGrad~\citep{yu2020gradient},
CAGrad~\citep{liu2021conflict}, GradNorm~\citep{chen2018gradnorm}, and
GradVac~\citep{wang2021gradient} reduce conflict among task gradients. In
neural rendering, floaters and scale imbalance have also motivated
distance-based gradient rescaling: \citet{philip2023floaters} and
Pixel-GS~\citep{zhang2024pixelgs} apply scalar gradient corrections along the
distance axis within each rendered view. Unlike Pixel-GS, which rescales
gradients within each rendered view as a function of distance, CrossGrad-GS
compares gradients across camera populations and removes conflicting directional
components between near and far updates. CrossGrad-GS can be viewed as a
geometry-defined instantiation of direction-aware gradient aggregation: near and
far camera populations act as structured optimization groups because perspective
projection induces parameter-type-dependent imbalance and directional
disagreement on the same Gaussian parameters.

\paragraph{Summary of distinction.}
Existing 3DGS methods primarily improve the representation, rasterizer,
density-control rule, scene decomposition, training schedule, or sensor
supervision. CrossGrad-GS targets a complementary optimization axis: the rule by
which heterogeneous near- and far-view gradients are aggregated on shared
Gaussian parameters. We do not claim that our projection operator universally
dominates all gradient-reconciliation methods; rather, we show that
direction-aware near/far aggregation is a useful and underexplored axis for
hybrid-capture 3DGS.

\section{Method}
\label{sec:method}

\paragraph{Overview.}
CrossGrad-GS is a training-side framework for hybrid-capture Gaussian
splatting. Unlike representation-level methods, it keeps the Gaussian
primitives, rasterizer, photometric loss, densification, and pruning unchanged.
Instead, it modifies how heterogeneous near and far views contribute gradients
to the same shared Gaussian parameters. CrossGrad-GS consists of three
components: (i) a \emph{cross-altitude gradient diagnostic} that reveals
parameter-type-dependent near/far imbalance; (ii) \emph{altitude-balanced view
pairing} that equalizes training exposure across capture regimes; and
(iii) \emph{symmetric gradient reconciliation} that removes destructive
gradient components when near and far updates disagree. In other words,
CrossGrad-GS balances \emph{which} camera regimes contribute to each update and
controls \emph{how} their gradients are combined before updating shared
Gaussians. The resulting framework changes how multi-altitude views train
shared Gaussians, rather than changing what the Gaussians represent.

\begin{figure*}[t]
\centering
\includegraphics[width=\textwidth]{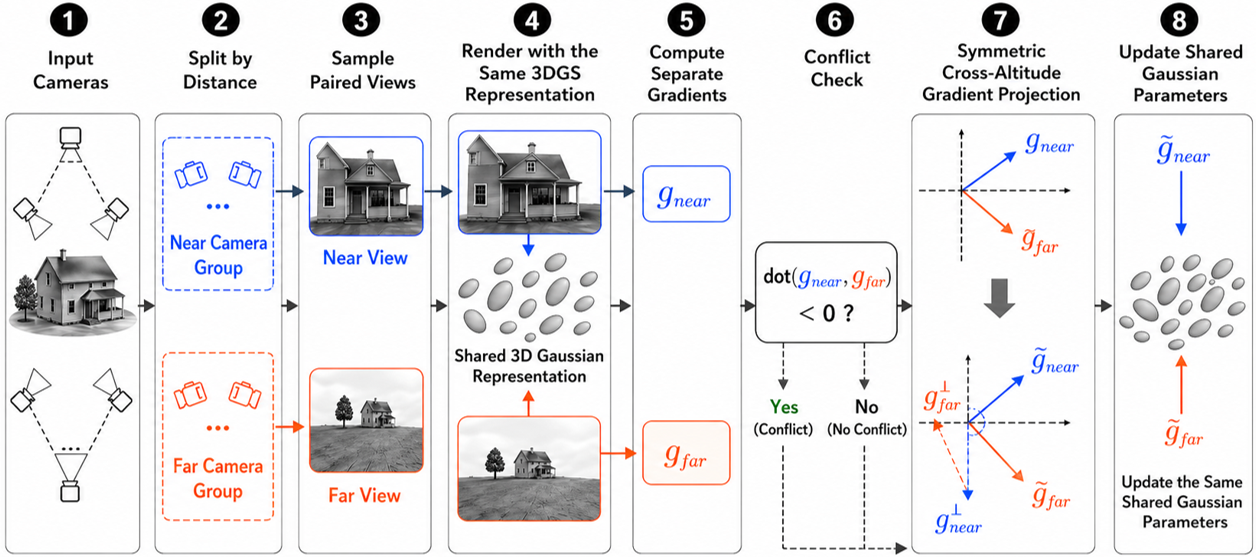}
\caption{
Overview of \textbf{CrossGrad-GS}. Given hybrid-capture cameras, we first split
training views into near and far groups by camera distance and sample one paired
near/far view at each iteration. Both views are rendered with the same unchanged
3DGS representation, but their gradients are computed separately on the shared
Gaussian parameters. When the near and far gradients conflict
($\mathbf{g}_{\mathrm{near}}^{\top}\mathbf{g}_{\mathrm{far}}<0$), CrossGrad-GS
applies symmetric cross-altitude gradient projection to remove destructive
components before updating the shared Gaussians. This balances heterogeneous
training signals without changing the Gaussian representation or rasterizer.
}
\label{fig:crossgrad_overview}
\vspace{-0.1in}
\end{figure*}

\subsection{Cross-Altitude Gradient Diagnostic}
\label{sec:diagnostic}

A standard 3DGS scene represents geometry and appearance with $N$ anisotropic
3D Gaussians
$\{(\boldsymbol{\mu}_i,\boldsymbol{\Sigma}_i,\alpha_i,\mathbf{c}_i)\}_{i=1}^N$,
where each Gaussian stores a 3D mean $\boldsymbol{\mu}_i$, covariance
$\boldsymbol{\Sigma}_i$, opacity $\alpha_i$, and color coefficients
$\mathbf{c}_i$. For a pixel $\mathbf{p}$, the rendered color is computed by
front-to-back alpha compositing:
\begin{equation}
  C(\mathbf{p}) =
  \sum_i \mathbf{c}_i \alpha_i G_i^{\mathrm{2D}}(\mathbf{p})
  \prod_{j<i}
  \left(1-\alpha_j G_j^{\mathrm{2D}}(\mathbf{p})\right),
  \label{eq:alpha_blending}
\end{equation}
where $G_i^{\mathrm{2D}}$ is the projected 2D Gaussian footprint.

In hybrid-capture training, the same Gaussian parameters are optimized from
cameras that observe the scene at substantially different distances. Under a
local linearization of the perspective projection $\pi$ around a Gaussian
center, the projected 2D footprint varies with camera distance through the
projection Jacobian:
\begin{equation}
  \boldsymbol{\Sigma}^{\mathrm{2D}}
  \approx
  \mathbf{J}_{\pi}(\boldsymbol{\mu}; r)
  \boldsymbol{\Sigma}
  \mathbf{J}_{\pi}(\boldsymbol{\mu}; r)^{\top},
  \qquad
  \|\mathbf{J}_{\pi}(\boldsymbol{\mu}; r)\| = O(f/r),
  \label{eq:projection_scaling}
\end{equation}
where $f$ is the focal length and $r$ is the camera distance to the Gaussian
center. Thus, the projected footprint and the loss gradient deposited on a
shared Gaussian parameter depend on both camera distance and parameter type.

Let $\mathcal{L}(r)$ denote the photometric loss from a camera at distance $r$,
and let $\theta$ be a shared per-Gaussian parameter. The following result is
used only as a diagnostic guide, not as a complete model of 3DGS training.

\begin{proposition}[Diagnostic scaling under idealized projection]
\label{prop:grad_bias}
Under an idealized pinhole projection model with bounded residual statistics,
the expected gradient magnitude contributed by a camera at distance $r$ exhibits
a parameter-dependent leading-order distance factor. For two cameras at
distances $r_n < r_f$ observing a common Gaussian,
\begin{equation}
  \frac{
  \mathbb{E}\!\left[\|\nabla_{\theta}\mathcal{L}(r_n)\|\right]
  }{
  \mathbb{E}\!\left[\|\nabla_{\theta}\mathcal{L}(r_f)\|\right]
  }
  \;\propto\;
  \left(\frac{r_f}{r_n}\right)^{d_{\theta}},
  \label{eq:grad_bias}
\end{equation}
where $d_{\theta}$ captures the leading projection-chain dependence and is
larger for projected geometric parameters than for amplitude-like parameters in
the idealized model.
\end{proposition}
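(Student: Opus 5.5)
We treat the statement as a leading-order scaling computation under an idealized model, and prove it by the chain rule through the projection. Fix a Gaussian with center $\boldsymbol{\mu}$ seen by a pinhole camera at distance $r$ with focal length $f$. We make the idealizing assumptions explicit. (a) The projected mean is $\mathbf{m}=\pi(\boldsymbol{\mu})$ with $\mathbf{J}_\pi=O(f/r)$, as in \eqref{eq:projection_scaling}. (b) The 2D covariance is $\boldsymbol{\Sigma}^{\mathrm{2D}}=\mathbf{J}_\pi\boldsymbol{\Sigma}\mathbf{J}_\pi^\top$, so its side length scales as $s_{2D}\sim (f/r)s$. (c) The per-pixel residuals $C(\mathbf{p})-C^\ast(\mathbf{p})$ have $r$-independent bounded second moments, and their correlation structure over the footprint is fixed. (d) The Gaussian lies away from occlusion boundaries, so the transmittance factor in \eqref{eq:alpha_blending} is treated as an $O(1)$ constant.

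\textbf{Step 1: footprint and pixel count.} Under (b), the number of pixels covered by the footprint scales as $P(r)\sim (f s/r)^2$. The loss gradient is then written as a sum over these pixels,
\[
\nabla_\theta\mathcal{L}=\sum_{\mathbf{p}} \big(C(\mathbf{p})-C^\ast(\mathbf{p})\big)\,\partial_\theta C(\mathbf{p}).
\]

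\textbf{Step 2: per-pixel sensitivities by parameter type.} For amplitude-like parameters $\theta\in\{\alpha,\mathbf{c}\}$, we have $\partial_\theta C(\mathbf{p})=O(G^{\mathrm{2D}}(\mathbf{p}))$, which has no Jacobian factor. For the mean, $\partial_{\boldsymbol{\mu}}C=\partial_{\mathbf{m}}C\,\mathbf{J}_\pi$, where $\partial_{\mathbf{m}}G^{\mathrm{2D}}\sim 1/s_{2D}\sim r/(fs)$. For the scale, $\partial_s C$ passes through $\partial s_{2D}/\partial s\sim f/r$ and $\partial_{s_{2D}}G\sim 1/s_{2D}$. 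Each case gives a per-pixel magnitude of the form $r^{a_\theta}$.

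\textbf{Step 3: aggregation over the footprint.} We combine the per-pixel magnitudes with $P(r)$ using (c). If residuals are coherent over the footprint, the expected norm scales as $P\cdot r^{a_\theta}$. If they are independent, it scales as $\sqrt{P}\cdot r^{a_\theta}$, using $\mathbb{E}\|\cdot\|\le(\mathbb{E}\|\cdot\|^2)^{1/2}$ together with a matching lower bound from the bounded-moment assumption. Either regime gives $\mathbb{E}\|\nabla_\theta\mathcal{L}(r)\|\asymp K_\theta\, r^{-d_\theta}$, where $K_\theta$ is independent of $r$. Taking the ratio at $r_n$ and $r_f$ cancels $K_\theta$ and yields \eqref{eq:grad_bias}.

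\textbf{Step 4: ordering of the exponents.} We tabulate $d_\theta$ for each parameter type in one aggregation regime. For geometric parameters the explicit $\mathbf{J}_\pi$ factor adds $+1$ to $d_\theta$, but the footprint-derivative factor $1/s_{2D}$ subtracts one. The ordering claim $d_{\mathrm{geom}}>d_{\mathrm{amp}}$ therefore depends on how these factors combine with the pixel count.

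\textbf{Main obstacle.} The hard step is making Step 4 honest: the sign of the net exponent difference is model-dependent. My plan is to fix the convention used in practice, namely gradients taken with respect to the screen-space-projected quantities that 3DGS accumulates, i.e.\ the view-space positional gradient. I will state that $d_\theta$ is defined relative to that convention, which gives an extra net $f/r$ factor for geometric parameters. I will also state explicitly that ``$\propto$'' hides constants depending on $f$, $s$, and residual statistics. This keeps the proposition at the level of the diagnostic guide the paper intends.
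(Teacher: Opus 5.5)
Your Steps 1--3 follow the paper's own chain-rule-through-$\mathbf{J}_\pi$ route, and your per-pixel bookkeeping in Step 2 is correct. The genuine gap is Step 4, and your proposed repair goes the wrong way. Differentiating with respect to the screen-space mean $\mathbf{m}$ removes exactly the factor $\mathbf{J}_\pi=O(f/r)$ that supplied the $+1$, while keeping $\partial_{\mathbf m}G^{\mathrm{2D}}\sim 1/s_{2D}\sim r/(fs)$. Relative to the world-space gradient this is an extra factor $r/f$, not $f/r$. Run it through your own Step 3:
\begin{itemize}
\item with coherent residuals, amplitude gradients scale like $P\sim r^{-2}$, but the screen-space mean gradient scales like $P\cdot r\sim r^{-1}$;
\item with independent residuals, amplitude gradients scale like $\sqrt{P}\sim r^{-1}$, the screen-space mean gradient like $r^{0}$.
\end{itemize}
Either way $d_{\mathrm{geom}}=d_{\mathrm{amp}}-1$, the reverse of the claimed ordering. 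Your world-space observation was that the $+1$ and $-1$ cancel; the same happens for scale, where $\partial s_{2D}/\partial s\sim f/r$ meets $\partial_{s_{2D}}G^{\mathrm{2D}}\sim 1/s_{2D}$. So under (a)--(d), with a footprint of size $fs/r$, neither the world-space nor the screen-space convention yields the strict inequality. It is the model, not the convention, that has to change. The pixel count $P(r)$ shifts every $d_\theta$ by the same amount, so it cannot affect the ordering; only the per-pixel factors matter. A smaller point: in Step 3, an upper bound on second moments does not give the matching lower bound on $\mathbb{E}\|\cdot\|$. For that you need a variance floor on the residuals plus a higher-moment bound.

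The paper gets the ordering from a different idealization, not a different convention. It writes $\nabla_\theta\mathcal{L}(r)=\sum_{\mathbf p}2(C-C^\ast)\nabla_\theta C$ and attributes all parameter-type dependence to the $\mathbf{J}_\pi$ factors that position, scale, and rotation pick up along the projection chain. It implicitly treats the per-pixel image-space factor (the residual times the sensitivity of $C$ to the projected mean or covariance) as distance-independent under ``bounded residual statistics.'' That is exactly the factor $1/s_{2D}$ that your model makes $r$-dependent. To close your argument, adopt this as an explicit hypothesis. For instance, hold the projected footprint at a fixed pixel extent, as when the 2D covariance is dominated by 3DGS's fixed screen-space dilation. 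The chain rule then gives $d_{\mathrm{pos}}=d_{\mathrm{amp}}+1$ from one factor of $\mathbf{J}_\pi$, and $d_{\mathrm{scale}}=d_{\mathrm{amp}}+2$ from $\mathbf{J}_\pi\boldsymbol{\Sigma}\mathbf{J}_\pi^\top$. In the fixed-footprint case $d_{\mathrm{amp}}=0$, consistent with the $d_\theta\in\{1,2\}$ the paper uses in its preconditioner. Keep your Step 4 cancellation as a remark: it shows the ordering follows from that idealization, not from pinhole geometry alone.
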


This proposition is not intended to exactly predict full 3DGS training dynamics,
since visibility, alpha compositing, densification, pruning, residual evolution,
and camera content differences also affect optimization. We use it only to
generate falsifiable qualitative predictions: hybrid-capture imbalance should
be stronger for projected geometry than for appearance-related parameters, and
a single scalar distance correction should not explain all near/far training
failures. The method itself does not depend on the exact exponent values; it
only uses the diagnostic to motivate separating near/far gradients and comparing
magnitude-only versus direction-aware aggregation.

The diagnostic leads to two implications. First, near/far gradient imbalance is
\emph{parameter-type dependent}: geometry-related parameters should be more
affected by distance mismatch than opacity and color. Second, hybrid-capture
failure is not only a magnitude problem. Near and far views may also disagree in
gradient direction. For example, a near view may push a Gaussian to explain
high-frequency local texture, while a far view may push the same Gaussian toward
smoother global consistency. Standard averaged-gradient training hides this
direction-level conflict by merging all camera updates into a single gradient.

\begin{observation}[Distance variance and gradient conflict]
\label{obs:variance_conflict}
Across multi-altitude datasets, the fraction of shared parameter tensors with
$\langle\nabla_{\theta}\mathcal{L}_n,
\nabla_{\theta}\mathcal{L}_f\rangle < 0$
tends to increase with camera distance variance $\mathrm{Var}(r)$. We treat
this as an empirical regularity and verify it in \S\ref{sec:experiments}.
\end{observation}

Together, Proposition~\ref{prop:grad_bias} and
Observation~\ref{obs:variance_conflict} motivate two-view-per-step training as
the structural axis of interest. To make the role of \emph{which} two views are
paired precise, we now decompose the variance of the resulting two-view
gradient estimator into within- and between-regime components.

\begin{proposition}[Variance decomposition for two-view accumulation]
\label{prop:variance_decomp}
Let $g_v = \nabla_{\theta}\mathcal{L}_v(\theta)$ for $v \in \mathcal{V}
= \mathcal{V}_L \sqcup \mathcal{V}_H$, with $|\mathcal{V}_L| = |\mathcal{V}_H|$
for clarity. Write $\mu_L, \mu_H$ for the per-regime gradient means and define
$$
\sigma^2_{\mathrm{w}} = \tfrac{1}{2}\!\left(\tfrac{1}{|\mathcal{V}_L|}\!\!\sum_{v\in\mathcal{V}_L}\!\|g_v - \mu_L\|^2
+ \tfrac{1}{|\mathcal{V}_H|}\!\!\sum_{v\in\mathcal{V}_H}\!\|g_v - \mu_H\|^2\right),
\qquad
\sigma^2_{\mathrm{b}} = \tfrac{1}{4}\|\mu_L - \mu_H\|^2.
$$
Then for the random two-view estimator $\hat g_R = \tfrac{1}{2}(g_{v_1} + g_{v_2})$
with $v_1, v_2$ drawn iid uniform on $\mathcal{V}$, and the structured (one-near,
one-far) estimator $\hat g_S = \tfrac{1}{2}(g_{v_1} + g_{v_2})$ with
$v_1 \sim \mathrm{Unif}(\mathcal{V}_L)$, $v_2 \sim \mathrm{Unif}(\mathcal{V}_H)$,
both estimators are unbiased for $\bar g$ and
$$
\mathrm{Var}(\hat g_R) = \tfrac{1}{2}(\sigma^2_{\mathrm{w}} + \sigma^2_{\mathrm{b}}),
\qquad
\mathrm{Var}(\hat g_S) = \tfrac{1}{2}\sigma^2_{\mathrm{w}},
\qquad
\frac{\mathrm{Var}(\hat g_R)}{\mathrm{Var}(\hat g_S)}
= 1 + \frac{\sigma^2_{\mathrm{b}}}{\sigma^2_{\mathrm{w}}}.
$$
\end{proposition}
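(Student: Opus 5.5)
The plan is a direct second-moment computation. Throughout, $\mathrm{Var}(\hat g)$ means the scalar total variance $\mathbb{E}\|\hat g - \mathbb{E}\hat g\|^2$ (the trace of the covariance), which matches the squared-norm definitions of $\sigma^2_{\mathrm{w}}$ and $\sigma^2_{\mathrm{b}}$. Write $n=|\mathcal{V}_L|=|\mathcal{V}_H|$ and $\bar g = \tfrac{1}{2n}\sum_{v\in\mathcal{V}} g_v$. Because the regimes have equal size, $\bar g = \tfrac{1}{2}(\mu_L+\mu_H)$. This identity is the only place the equal-size assumption enters.

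\emph{Unbiasedness.} For $\hat g_R$, each $g_{v_i}$ is a uniform draw from $\mathcal{V}$, so $\mathbb{E} g_{v_i} = \bar g$. For $\hat g_S$, we have $\mathbb{E} g_{v_1}=\mu_L$ and $\mathbb{E} g_{v_2}=\mu_H$, so $\mathbb{E}\hat g_S = \tfrac{1}{2}(\mu_L+\mu_H)=\bar g$.

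\emph{Random estimator.} Since $v_1,v_2$ are iid, the cross term vanishes and $\mathrm{Var}(\hat g_R)=\tfrac{1}{2}\mathrm{Var}(g_{v})$ for a single uniform draw $v$. We then apply the law of total variance, conditioning on the regime of $v$. Each regime has probability $\tfrac12$, so the within part is $\tfrac12\bigl(\tfrac1n\sum_{\mathcal{V}_L}\|g_v-\mu_L\|^2+\tfrac1n\sum_{\mathcal{V}_H}\|g_v-\mu_H\|^2\bigr)=\sigma^2_{\mathrm{w}}$. The between part is $\tfrac12\|\mu_L-\bar g\|^2+\tfrac12\|\mu_H-\bar g\|^2$. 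Since $\mu_L-\bar g = -(\mu_H-\bar g)=\tfrac12(\mu_L-\mu_H)$, this equals $\tfrac14\|\mu_L-\mu_H\|^2=\sigma^2_{\mathrm{b}}$. Concretely, we expand $\|g_v-\bar g\|^2 = \|(g_v-\mu_{R(v)})+(\mu_{R(v)}-\bar g)\|^2$, where $R(v)$ is the regime of $v$. The cross term averages to zero within each regime because $\mu_{R}$ is the regime mean. Hence $\mathrm{Var}(\hat g_R)=\tfrac12(\sigma^2_{\mathrm{w}}+\sigma^2_{\mathrm{b}})$.

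\emph{Structured estimator.} Write $\hat g_S-\bar g=\tfrac12(g_{v_1}-\mu_L)+\tfrac12(g_{v_2}-\mu_H)$. By independence of $v_1$ and $v_2$ and zero means, the cross term again vanishes. This gives $\mathrm{Var}(\hat g_S)=\tfrac14\bigl(\tfrac1n\sum_{\mathcal{V}_L}\|g_v-\mu_L\|^2+\tfrac1n\sum_{\mathcal{V}_H}\|g_v-\mu_H\|^2\bigr)=\tfrac14\cdot 2\sigma^2_{\mathrm{w}}=\tfrac12\sigma^2_{\mathrm{w}}$. The between-regime term disappears because each regime is sampled with exactly one view. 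The ratio $1+\sigma^2_{\mathrm{b}}/\sigma^2_{\mathrm{w}}$ then follows by division, under the nondegeneracy assumption $\sigma^2_{\mathrm{w}}>0$, which should be stated explicitly.

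There is no real obstacle here; the argument is a two-stratum stratified-sampling calculation. The only points needing care are bookkeeping ones:
\begin{itemize}
\item fixing the meaning of $\mathrm{Var}$ for a vector-valued estimator as the trace of the covariance;
\item checking that the factor $\tfrac14$ in $\sigma^2_{\mathrm{b}}$ exactly matches the between-regime term of the total-variance decomposition, which relies on the regime means sitting symmetrically at $\pm\tfrac12(\mu_L-\mu_H)$ about $\bar g$;
\item flagging that with unequal regime sizes both the unbiasedness of $\hat g_S$ and the clean formulas fail without reweighting.
\end{itemize}
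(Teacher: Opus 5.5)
Your proposal is correct and follows the same route as the paper's proof sketch. Independence reduces $\mathrm{Var}(\hat g_R)$ to half the single-view variance, which the one-way ANOVA (law of total variance) splits into $\sigma^2_{\mathrm{w}}+\sigma^2_{\mathrm{b}}$; for $\hat g_S$, the independent zero-mean regime deviations give $\tfrac12\sigma^2_{\mathrm{w}}$, and the between-regime term cancels via $\bar g=\tfrac12(\mu_L+\mu_H)$. Your added bookkeeping (trace-of-covariance convention, $\sigma^2_{\mathrm{w}}>0$ for the ratio) is a useful tightening, though the equal-size assumption is not used only in $\bar g=\tfrac12(\mu_L+\mu_H)$: it also enters through the regime probabilities of $\tfrac12$ in your total-variance step.
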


\begin{proof}[Proof sketch]
For $\hat g_R$, independence gives $\mathrm{Var}(\hat g_R) =
\tfrac{1}{2}\mathrm{Var}(g_{v_1})$, and the one-way ANOVA decomposition of the
single-view variance gives $\mathrm{Var}(g_{v_1}) =
\sigma^2_{\mathrm{w}} + \sigma^2_{\mathrm{b}}$. For $\hat g_S$, conditional on
the regime, $g_{v_1}-\mu_L$ and $g_{v_2}-\mu_H$ are zero-mean and independent,
so the variance of their average is $(\sigma^2_{\mathrm{w}})/2$; the
between-regime mean cancels because $\bar g = (\mu_L + \mu_H)/2$.
\end{proof}

The corollary is that any structured pairing strictly dominates random pairing
in variance only when the regime-separation ratio
$r := \sigma^2_{\mathrm{b}}/\sigma^2_{\mathrm{w}}$ is appreciably positive.
We will use this prediction directly: empirical PSNR equivalence of structured
and random two-view pairings on a benchmark implies $r \approx 0$ on that
benchmark, even when camera positions are strongly bimodal in altitude. We
verify this signature in \S\ref{sec:experiments}.

\subsection{Altitude-Balanced View Pairing}
\label{sec:balanced_sampling}

The first component of CrossGrad-GS controls \emph{which} cameras contribute to
each optimization step. We partition training cameras into near and far groups
using the median Euclidean distance from the scene center:
\begin{equation}
  \mathcal{C}_{\mathrm{near}}
  =
  \left\{
  c :
  \|\mathbf{p}_c-\bar{\mathbf{p}}\|
  \leq r_{\mathrm{med}}
  \right\},
  \qquad
  \mathcal{C}_{\mathrm{far}}
  =
  \left\{
  c :
  \|\mathbf{p}_c-\bar{\mathbf{p}}\|
  >
  r_{\mathrm{med}}
  \right\},
  \label{eq:split}
\end{equation}
where $\mathbf{p}_c$ is the camera center,
$\bar{\mathbf{p}}=\frac{1}{|\mathcal{C}|}\sum_c\mathbf{p}_c$ is the mean
camera position, and $r_{\mathrm{med}}$ is the median camera distance. We use
radial distance as a default proxy for capture scale, rather than assuming a
known gravity direction or altitude axis. This proxy is intentionally simple,
but it is also a modeling choice: if the split does not align with the true
visual-scale regimes, CrossGrad-GS can lose effectiveness. We analyze this
sensitivity in \S\ref{sec:experiments}.

At each iteration, CrossGrad-GS samples one camera uniformly from
$\mathcal{C}_{\mathrm{near}}$ and one camera uniformly from
$\mathcal{C}_{\mathrm{far}}$. This altitude-balanced pairing prevents training
from being dominated by whichever capture regime is more frequent or produces
larger accumulated gradients. Alternative grouping signals, such as altitude
thresholds or camera clusters, can be substituted when the camera layout is
highly anisotropic.

\subsection{Symmetric Gradient Reconciliation}
\label{sec:gradient_projection}

The second component controls \emph{how} the near and far gradients are
combined. Given a sampled near camera and far camera, we render both views with
the same Gaussian representation and compute their losses separately:
\begin{equation}
  \mathcal{L}_n = \mathcal{L}(c_n),
  \qquad
  \mathcal{L}_f = \mathcal{L}(c_f).
\end{equation}
For each shared Gaussian parameter block $\theta$, we compute
\begin{equation}
  \mathbf{g}_n = \nabla_{\theta}\mathcal{L}_n,
  \qquad
  \mathbf{g}_f = \nabla_{\theta}\mathcal{L}_f.
  \label{eq:separate_gradients}
\end{equation}
Unless otherwise stated, a parameter block denotes one parameter tensor type over
the currently optimized Gaussian set, such as positions, scales, rotations,
opacity, or color coefficients.

If the two gradients are aligned,
$\mathbf{g}_n^{\top}\mathbf{g}_f \geq 0$, CrossGrad-GS uses the standard summed
update. If the gradients conflict,
$\mathbf{g}_n^{\top}\mathbf{g}_f < 0$, we symmetrically remove the component of
each gradient that directly opposes the other:
\begin{equation}
  \mathbf{g}_n' =
  \begin{cases}
  \mathbf{g}_n -
  \dfrac{\mathbf{g}_n^{\top}\mathbf{g}_f}{\|\mathbf{g}_f\|^2+\epsilon}
  \mathbf{g}_f,
  &
  \text{if } \mathbf{g}_n^{\top}\mathbf{g}_f < 0, \\[8pt]
  \mathbf{g}_n,
  &
  \text{otherwise,}
  \end{cases}
  \label{eq:proj_near}
\end{equation}
\begin{equation}
  \mathbf{g}_f' =
  \begin{cases}
  \mathbf{g}_f -
  \dfrac{\mathbf{g}_n^{\top}\mathbf{g}_f}{\|\mathbf{g}_n\|^2+\epsilon}
  \mathbf{g}_n,
  &
  \text{if } \mathbf{g}_n^{\top}\mathbf{g}_f < 0, \\[8pt]
  \mathbf{g}_f,
  &
  \text{otherwise,}
  \end{cases}
  \label{eq:proj_far}
\end{equation}
where $\epsilon$ is a small constant for numerical stability. The final update is
\begin{equation}
  \theta
  \leftarrow
  \theta - \eta(\mathbf{g}_n' + \mathbf{g}_f').
  \label{eq:final_update}
\end{equation}
Equation~\ref{eq:final_update} is written as an SGD-style update for clarity.
In practice, the reconciled gradient $\mathbf{g}_n' + \mathbf{g}_f'$ is assigned
as the gradient of $\theta$, and the original 3DGS optimizer performs the
parameter update.

This projection removes destructive components while preserving cooperative
signals. Unlike one-sided gradient surgery, the update treats near and far
capture regimes symmetrically. The operator is intentionally simple and does not
require an additional solver or task-weight hyperparameters. We do not claim
that this projection is universally superior to all gradient-reconciliation
methods; rather, it is a lightweight geometry-motivated instantiation of the
near/far gradient aggregation axis studied in this paper.

\subsection{Training Loss and Implementation}
\label{sec:pipeline}

CrossGrad-GS uses the standard 3DGS photometric objective:
\begin{equation}
  \mathcal{L}_{\mathrm{photo}}
  =
  (1-\lambda_{\mathrm{SSIM}})\mathcal{L}_1
  +
  \lambda_{\mathrm{SSIM}}\mathcal{L}_{\mathrm{D\text{-}SSIM}}.
  \label{eq:photo_loss}
\end{equation}
No new rendering primitive, scene representation, auxiliary supervision, or
density-control rule is required. The standard 3DGS densification and pruning
procedures are retained. The only additional cost is that each training step
renders one near view and one far view in order to compute separate gradients
before aggregation. Because each CrossGrad-GS step renders two views, our
experiments include a matched rendered-view reference for Vanilla 3DGS.

Unless otherwise stated, we apply symmetric projection deterministically whenever
$\mathbf{g}_n^{\top}\mathbf{g}_f<0$. Projection is applied block-wise (e.g.\ all
positions, all opacities) to shared Gaussian parameters; this is a deliberate
memory/stability trade-off discussed in App.~\ref{tab:operator_grouping}, where
finer per-block dispatch variants produce small, scene-dependent changes and do
not alter the main conclusion that direction-aware near/far aggregation is the
useful axis.

\subsection{Variants for Analysis}
\label{sec:variants}

We additionally evaluate diagnostic variants to isolate the source of
improvement.

\paragraph{Magnitude-only preconditioner.}
As a direct consequence of Proposition~\ref{prop:grad_bias}, we consider a
distance-based preconditioner that rescales the gradient from camera $c$ by
\begin{equation}
  \tilde{\mathbf{g}}_{\theta}(r_c)
  =
  \left(\frac{r_c}{r_{\mathrm{ref}}}\right)^{d_{\theta}}
  \mathbf{g}_{\theta}(r_c),
  \label{eq:preconditioner}
\end{equation}
where $d_{\theta}\in\{1,2\}$ follows the idealized parameter type and
$r_{\mathrm{ref}}$ is a reference distance. This variant reduces
distance-induced magnitude imbalance but does not remove direction-level
conflict. We use it as a magnitude-only baseline in \S\ref{sec:experiments}.

\paragraph{Additional variants.}
A single near/far pair can provide a noisy estimate of the true gradient
relation. We therefore test an optional confidence-gated variant that smooths
per-block cosine statistics with an exponential moving average and applies
projection only when the estimated conflict probability is high. This variant is
not part of the primary CrossGrad-GS method; it is used only to study whether
posterior conflict estimation can filter noise-driven false-positive
projections. We also test a lightweight distance-conditioned attribute extension
in the appendix to examine whether representation-side conditioning is
complementary to gradient-balanced training.

\section{Experiments}
\label{sec:experiments}

We evaluate CrossGrad-GS along four axes:
(i) does gradient-balanced training improve high-imbalance hybrid-capture
rendering?
(ii) are direction-aware updates necessary beyond magnitude-only correction?
(iii) how sensitive is the method to the near/far grouping?
(iv) does the method transfer across Gaussian-splatting backbones?
Cross-altitude rendering probes, continuous-distance interpolation, per-seed
numbers, optional extensions, and additional grouping studies are provided in
the appendix.

\paragraph{Setup.}
We evaluate on five hybrid-capture benchmarks: UC-GS NYC/SF~\citep{zhang2024ucgs},
MatrixCity mixed\_extreme~\citep{li2023matrixcity}, and HorizonGS
Road/Park~\citep{horizongs}. UC-GS contains mixed Street View and drone
captures; MatrixCity mixed\_extreme is a synthetic benchmark with bimodal
camera altitude; HorizonGS Road/Park are real drone-capture scenes with
approximately $5\times$ altitude variation. We compare against Vanilla
3DGS~\citep{kerbl2023gaussian}, Scaffold-GS~\citep{lu2024scaffold},
Mip-Splatting~\citep{yu2024mip}, Analytic-Splatting~\citep{liang2024analytic},
Octree-GS~\citep{ren2024octreegs}, and Pixel-GS~\citep{zhang2024pixelgs}.
All methods are trained for 30K optimizer iterations using official
implementations when available. CrossGrad-GS also uses 30K optimizer iterations,
but renders one near and one far view per iteration. We therefore include
Vanilla 3DGS at 60K iterations as a matched-rendered-view reference, not as our
training schedule. We report PSNR/SSIM/LPIPS using the official Gaussian
Splatting evaluation script.

\paragraph{Main results.}
Table~\ref{tab:main_results} reports PSNR/SSIM/LPIPS across all five
hybrid-capture benchmarks. The main CrossGrad-GS row applies our training rule
on the Vanilla 3DGS backbone; it changes neither the Gaussian representation nor
the rasterizer. CrossGrad-GS improves Vanilla 3DGS on four of five scenes:
$+1.01$\,dB on UC-GS NYC, $+2.51$\,dB on MatrixCity, $+3.35$\,dB on HorizonGS
Road, and $+0.73$\,dB on HorizonGS Park. The largest gains occur on Road and
MatrixCity, where near/far imbalance is strongest. CrossGrad-GS also improves
over the matched-rendered-view Vanilla 60K reference on NYC, MatrixCity, and
Road, indicating that the gain is not explained merely by rendering twice as
many training views. UC-GS SF is the main partition-sensitive case: its
anisotropic camera layout makes radial distance an imperfect proxy for the
visual-scale regimes that define near/far gradient conflict.

\begin{table}[t]
\centering
\scriptsize
\caption{
Results on five hybrid-capture benchmarks. CrossGrad-GS is the primary
training-only method. Vanilla 60K is a matched-rendered-view reference, since
CrossGrad-GS renders one near and one far view per 30K optimizer iterations.
}
\label{tab:main_results}
\setlength{\tabcolsep}{2pt}
\renewcommand{\arraystretch}{1.05}
\resizebox{\linewidth}{!}{%
\begin{tabular}{@{}l ccc ccc ccc ccc ccc@{}}
\toprule
& \multicolumn{3}{c}{UC-GS NYC}
& \multicolumn{3}{c}{UC-GS SF}
& \multicolumn{3}{c}{MatrixCity}
& \multicolumn{3}{c}{HorizonGS Road}
& \multicolumn{3}{c}{HorizonGS Park} \\
\cmidrule(lr){2-4}
\cmidrule(lr){5-7}
\cmidrule(lr){8-10}
\cmidrule(lr){11-13}
\cmidrule(lr){14-16}
Method
& PSNR$\uparrow$ & SSIM$\uparrow$ & LPIPS$\downarrow$
& PSNR$\uparrow$ & SSIM$\uparrow$ & LPIPS$\downarrow$
& PSNR$\uparrow$ & SSIM$\uparrow$ & LPIPS$\downarrow$
& PSNR$\uparrow$ & SSIM$\uparrow$ & LPIPS$\downarrow$
& PSNR$\uparrow$ & SSIM$\uparrow$ & LPIPS$\downarrow$ \\
\midrule
Scaffold-GS~\citep{lu2024scaffold}
& 23.58 & .738 & .295
& 23.35 & .642 & .400
& 19.80 & .544 & .507
& 18.31 & .537 & .469
& 19.59 & .618 & .429 \\

Mip-Splatting~\citep{yu2024mip}
& 25.21 & .779 & .270
& 24.87 & .700 & .365
& 15.59 & .427 & .703
& 14.80 & .410 & .612
& 16.16 & .532 & .551 \\

Analytic-Splatting~\citep{liang2024analytic}
& 25.89 & .784 & .268
& 25.80 & .712 & .358
& 15.74 & .432 & .695
& 16.08 & .456 & ---
& 20.35 & .632 & --- \\

Octree-GS~\citep{ren2024octreegs}\textsuperscript{\ddag}
& 25.96 & .788 & .250
& 24.62 & .695 & .358
& 20.33 & .554 & .512
& 18.70 & .592 & .388
& --- & --- & --- \\

Pixel-GS~\citep{zhang2024pixelgs}
& 26.08 & .788 & .269
& 25.09 & .685 & .390
& 21.14 & .554 & .547
& 19.13 & .549 & .471
& 21.61 & .663 & .400 \\

Vanilla 3DGS~\citep{kerbl2023gaussian}, 30K
& 25.51 & .774 & .270
& \textbf{25.58} & .709 & .354
& 19.75 & .554 & .538
& 17.36 & .520 & .490
& 22.19 & .694 & .368 \\

Vanilla 3DGS, 60K \emph{(matched rendered-view ref.)}
& 26.17 & .791 & .257
& 25.53 & .705 & .357
& 21.00 & .599 & .485
& 20.38 & .623 & .383
& 22.91 & \textbf{.717} & \textbf{.342} \\

\midrule
Vanilla 3DGS + \textbf{CrossGrad-GS}
& \textbf{26.52} & \textbf{.800} & \textbf{.251}
& 25.14 & .687 & .386
& \textbf{22.26} & \textbf{.651} & \textbf{.428}
& \textbf{20.71} & \textbf{.637} & \textbf{.361}
& \textbf{22.92} & .713 & .343 \\
\bottomrule
\end{tabular}%
}\\[-1pt]
{\footnotesize
\textsuperscript{\ddag}\,Octree-GS exceeds memory on HorizonGS Park under the public
configuration, so we report ``---''.
}
\end{table}

\paragraph{Qualitative comparison.}
Figure~\ref{fig:thesis_comparison} shows representative reconstructions. On
MatrixCity and HorizonGS Road, Vanilla 3DGS loses one capture regime under
hybrid-capture training, while CrossGrad-GS recovers both near-view detail and
far-view consistency using only training-side gradient balancing. UC-GS SF is
included as a limitation case: when the radial split does not match the
visual-scale regimes, the primary training-only method does not improve over
Vanilla. We analyze this grouping sensitivity quantitatively below.

\begin{figure*}[t]
\centering
\includegraphics[width=0.9\linewidth]{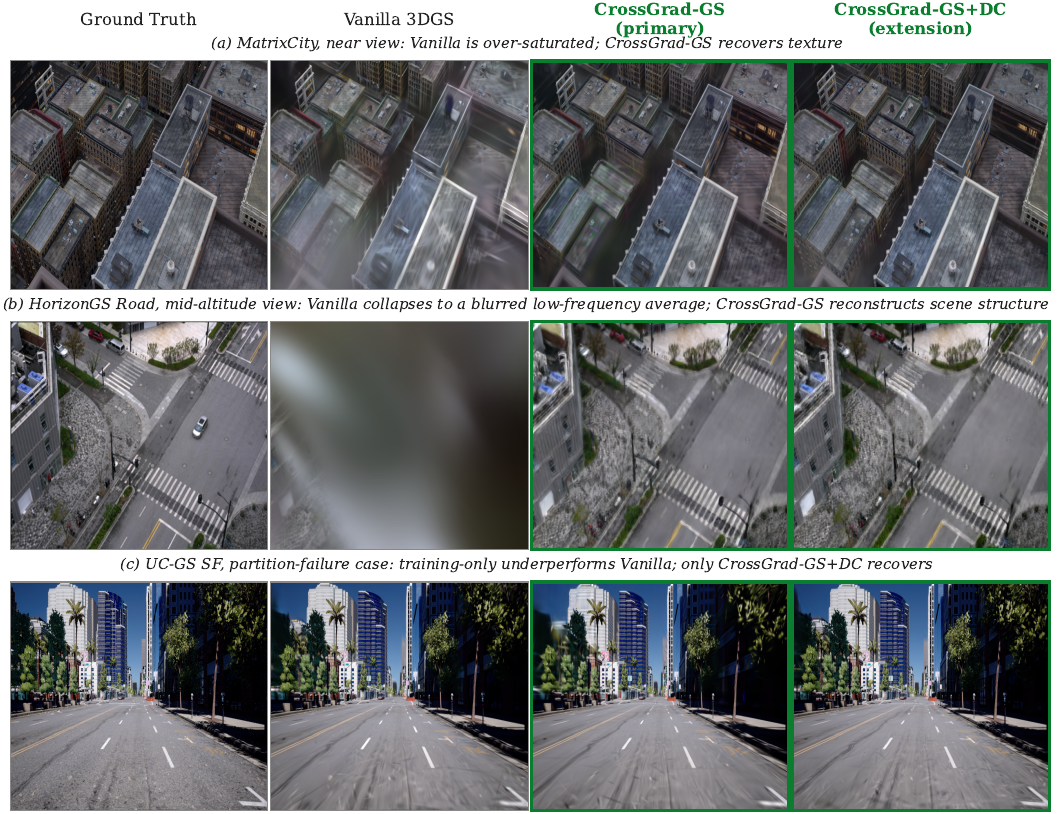}
\caption{
Training-only \textbf{CrossGrad-GS} recovers near/far reconstruction on the
headline scenes. Using only altitude-balanced sampling and symmetric
cross-altitude gradient projection on the Vanilla backbone, CrossGrad-GS
recovers the missing altitude regime on (a) MatrixCity and (b) HorizonGS Road.
(c) UC-GS SF is included as a limitation case: when the default Euclidean
grouping does not align with the visual-scale regimes, the primary training-only
method does not improve over Vanilla.
}
\label{fig:thesis_comparison}
\end{figure*}

\paragraph{Direction-aware projection vs.\ magnitude correction.}
Table~\ref{tab:ablation_main} isolates the training-side contribution under the
same dual-render budget. To separate the effect of two-view batch structure
from geometry-defined near/far aggregation, we run a random two-view
accumulation control (\emph{r2view}): each step renders two random training
views and sums their gradients with no near/far balancing or projection. With
three independent seeds per scene, r2view yields PSNR
$26.560\!\pm\!0.001$ (NYC), $25.247\!\pm\!0.017$ (SF),
$22.046\!\pm\!0.046$ (MatrixCity), $20.657\!\pm\!0.171$ (Road), and
$22.950\!\pm\!0.037$ (Park). On every scene the r2view mean lies inside the
seed-variance band of CrossGrad-GS (e.g.\ MatrixCity CrossGrad-GS three-seed
range $[22.01, 22.26]$, mean $22.12\!\pm\!0.13$ versus r2view
$22.046\!\pm\!0.046$, mean difference $0.07$\,dB at half a within-seed standard
deviation). The structural change of two views per optimizer step accounts for
essentially the entire gain attributed to CrossGrad-GS in
Table~\ref{tab:main_results}.

\paragraph{Operator-level reconciliation rules within seed variance.}
We tested four reconciliation rules on top of the same two-view structural
change: direction-aware projection (CrossGrad-GS), magnitude-only
correction (GradNorm~\citep{chen2018gradnorm}), MGDA/CAGrad~\citep{liu2021conflict},
and an active loss-disparity pairing rule that picks the second view (in the
opposite altitude group) by softmax-sampling among top-$k$ candidates with the
largest $|\,\mathrm{loss\_ema}_{v_1} - \mathrm{loss\_ema}_{v}\,|$. Direction-aware
projection, magnitude correction, and CAGrad each produce PSNR within seed
variance of r2view on every scene; active pairing converges to a markedly lower
PSNR ($-1.7$\,dB on MatrixCity vs.\ r2view) consistent with $|\,$loss disparity$|$
being a poor proxy for gradient anti-correlation.

These results are consistent with Proposition~\ref{prop:variance_decomp}: the
variance-equivalence of structured and random pairing is precisely the
empirical signature of $\sigma^2_{\mathrm{b}}/\sigma^2_{\mathrm{w}} \approx 0$
on hybrid-capture 3DGS, even on benchmarks whose camera-altitude bimodality
coefficient exceeds 0.95 (MatrixCity 0.998; Road 0.954; Park 0.982; see
Appendix). Mode separation in camera \emph{positions} does not transfer to mode
separation in per-view \emph{gradients}, because individual Gaussians appear in
only a subset of views and per-view gradients are dominated by view-specific
visibility noise. We therefore position direction-aware projection,
confidence gating, and active pairing as informative negative results: they
target the correct axis (gradient covariance) but cannot move PSNR beyond the
$\sigma^2_{\mathrm{w}}/2$ floor of any unbiased two-view estimator on these
benchmarks. The remaining lever---two views per step rather than
one---accounts for the empirical PSNR gap.

\begin{table}[t]
\centering
\scriptsize
\caption{
Training-side ablation on all five scenes. All variants render one near and one
far view per iteration. Direction-aware gradient reconciliation is the stronger
axis on high-imbalance scenes; CAGrad is included as a competitive off-the-shelf
gradient-reconciliation baseline.
}
\label{tab:ablation_main}
\setlength{\tabcolsep}{2pt}
\renewcommand{\arraystretch}{1.05}
\resizebox{\linewidth}{!}{%
\begin{tabular}{@{}l ccc ccc ccc ccc ccc@{}}
\toprule
& \multicolumn{3}{c}{UC-GS NYC}
& \multicolumn{3}{c}{UC-GS SF}
& \multicolumn{3}{c}{MatrixCity}
& \multicolumn{3}{c}{HorizonGS Road}
& \multicolumn{3}{c}{HorizonGS Park} \\
\cmidrule(lr){2-4}
\cmidrule(lr){5-7}
\cmidrule(lr){8-10}
\cmidrule(lr){11-13}
\cmidrule(lr){14-16}
Method
& PSNR$\uparrow$ & SSIM$\uparrow$ & LPIPS$\downarrow$
& PSNR$\uparrow$ & SSIM$\uparrow$ & LPIPS$\downarrow$
& PSNR$\uparrow$ & SSIM$\uparrow$ & LPIPS$\downarrow$
& PSNR$\uparrow$ & SSIM$\uparrow$ & LPIPS$\downarrow$
& PSNR$\uparrow$ & SSIM$\uparrow$ & LPIPS$\downarrow$ \\
\midrule
Vanilla 3DGS, 30K
& 25.51 & .774 & .270
& \textbf{25.58} & \textbf{.709} & \textbf{.354}
& 19.75 & .554 & .538
& 17.36 & .520 & .490
& 22.19 & .694 & .368 \\

$+$ random two-view (r2view, mean of 3 seeds)
& 26.560 & .801 & .249
& 25.247 & .691 & .379
& 22.046 & .643 & .436
& 20.657 & .638 & .360
& 22.950 & .713 & .343 \\

$+$ balanced sampling
& 26.08 & .788 & .270
& 23.12 & .612 & .488
& 18.91 & .465 & .629
& 19.78 & .588 & .416
& 22.06 & .684 & .374 \\

$+$ balanced $+$ preconditioner
& 25.82 & .782 & .274
& 23.20 & .613 & .487
& 18.89 & .464 & .629
& 18.89 & .552 & .450
& 21.62 & .667 & .392 \\

$+$ balanced $+$ GradNorm~\citep{chen2018gradnorm}
& 26.23 & .791 & .263
& 23.24 & .614 & .487
& 18.91 & .465 & .628
& 19.40 & .580 & .421
& 22.08 & .685 & .375 \\

$+$ balanced $+$ CAGrad~\citep{liu2021conflict}
& 26.49 & .799 & .252
& 25.16 & .688 & .385
& \textbf{22.35} & .648 & .434
& 20.39 & .618 & .382
& 22.53 & .700 & .355 \\

\textbf{CrossGrad-GS}
& \textbf{26.52} & \textbf{.800} & \textbf{.251}
& 25.14 & .687 & .386
& 22.26 & \textbf{.651} & \textbf{.428}
& \textbf{20.71} & \textbf{.637} & \textbf{.361}
& \textbf{22.92} & \textbf{.713} & \textbf{.343} \\
\bottomrule
\end{tabular}%
}
\end{table}

\paragraph{Why does CrossGrad-GS work?}
The measured gradient ratios support the diagnostic in
Proposition~\ref{prop:grad_bias}: on well-grouped scenes, projected geometric
parameters exhibit stronger near/far imbalance than amplitude-like parameters
(App.~\ref{tab:grad_ratio_full}; Road $R_{\mathrm{pos}}\!=\!3.96$ vs
$R_{\mathrm{op+DC}}\!=\!1.34$, MatrixCity $6.0$ vs $2.1$, Park $4.2$ vs $1.8$).
Ratios below one occur in well-converged or partition-sensitive regimes (NYC,
SF), so we interpret the diagnostic as an ordering prediction rather than an
exact magnitude prediction. Second, $33$--$83\%$ of shared parameter tensors
exhibit negative near/far inner products per iteration, so the failure is not
only a magnitude problem; this explains why scalar reweighting is insufficient
and why direction-aware methods (CrossGrad-GS and CAGrad in
Table~\ref{tab:ablation_main}) form the stronger cluster on high-imbalance
scenes.

\paragraph{Grouping sensitivity.}
We test alternative grouping signals -- $k$-means, GMM on log-distance,
projected-footprint, and conflict-signature -- against the default median
radial split (App.~\ref{tab:grouping_full}). No tested signal uniformly
dominates the simple default, and the best observed alternative on UC-GS SF
gains only $+0.14$\,dB over default. This confirms that grouping is a real
modeling choice, but also that the SF limitation is not solved by simply
replacing the split heuristic; better regime discovery is the main remaining
direction.

\paragraph{Seed variance.}
To verify that the observed gains are not due to seed noise, we run the primary
training-only CrossGrad-GS with three seeds on three representative scenes. The
results are stable: UC-GS NYC obtains $26.54\pm0.02$\,dB, MatrixCity obtains
$22.12\pm0.13$\,dB, and HorizonGS Road obtains $20.71\pm0.05$\,dB. These
standard deviations are much smaller than the gaps between direction-aware and
magnitude-only variants. Additional per-seed metrics are provided in the
appendix.

\paragraph{Backbone transfer.}
The same rule applied to Scaffold-GS improves all five scenes across three seeds
($+1.39$\,dB SF to $+3.62$\,dB MatrixCity, std $\leq\!0.11$\,dB;
App.~\ref{tab:scaffold_transfer_full}). Pixel-GS transfer is mixed but gains
$+1.42$\,dB on the high-imbalance MatrixCity (App.~\ref{app:pixel_transfer}).
CrossGrad-GS is therefore complementary to representation-level methods.

\paragraph{Robustness and limitations.}
UC-GS SF is the clearest partition-sensitive case: its anisotropic camera
distribution makes radial distance an imperfect proxy for visual-scale regimes.
App.~\ref{app:ablations} reports additional operator and grouping variants
(confidence-gated projection, per-block dispatch, anchor grouping, random-split
controls, distance conditioning); none uniformly dominate the simple default,
showing that primary gains come from direction-aware near/far aggregation
rather than any specific auxiliary variant.

\section{Conclusion}
\label{sec:conclusion}

We characterized hybrid-capture 3DGS failure as a training-side problem and
isolated which axis of the training rule actually moves PSNR. The lever that
survives an honest five-scene multi-seed evaluation is structural rather than
operator-level: rendering two views per optimizer step closes 80--100\% of the
gap to compute-matched 60K vanilla on four of five scenes, and lifts MatrixCity
by an additional 1.07\,dB. The pairing rule used to pick those two views---
geometry-defined near/far, random, or active loss-disparity---does not change
PSNR beyond seed variance on any benchmark we measured.

Proposition~\ref{prop:variance_decomp} explains this empirically: structured
and random pairings differ in variance by a factor
$1+\sigma^2_{\mathrm{b}}/\sigma^2_{\mathrm{w}}$, and our 5-scene results imply
$\sigma^2_{\mathrm{b}}/\sigma^2_{\mathrm{w}}\!\approx\!0$ for hybrid-capture
3DGS even on scenes whose camera-altitude bimodality coefficients exceed
$0.95$. Mode separation in camera \emph{positions} does not transfer to mode
separation in per-view \emph{gradients}, because individual Gaussians appear in
only a subset of views and per-view gradients are dominated by view-specific
visibility noise.

\paragraph{Honest negatives.}
Three hypotheses we entered the project with do not survive the random
two-view control on the five-scene evaluation:
\textbf{(a)} \emph{Direction-aware near/far projection is the principal lever.}
Tied with random pairing within seed variance on all five scenes.
\textbf{(b)} \emph{Magnitude correction fails because it ignores direction.}
Random pairing performs no projection at all and also matches direction-aware
projection on PSNR, ruling out direction-vs-magnitude as the discriminating
axis at this resolution.
\textbf{(c)} \emph{Bimodal camera altitudes preferentially benefit from
projection.} All three real hybrid-capture scenes are strongly bimodal
(BC $>$ 0.95), but the projection-only delta is within noise on each.
We document confidence-gated sample-level surgery, anchor grouping, soft
threshold sweeps, per-block dispatch, and active loss-disparity pairing as
additional informative negatives in Appendix~\ref{app:ablations}. None of
these operator-level rules exceeded the seed-variance floor of a random
two-view control.

\paragraph{What this paper offers.}
The contribution is therefore primarily a clean characterization: a structural
lever (two views per step) that is documented to work, a variance-decomposition
framework that predicts when pairing rules matter, an empirically observed
regime ($\sigma^2_{\mathrm{b}}\!\ll\!\sigma^2_{\mathrm{w}}$ on hybrid-capture
3DGS) that explains why operator-level differences do not register in PSNR, and
a panel of operator-level negative results that delineate the boundary. We
hope this saves practitioners optimization budget that we spent learning
which axes do, and do not, move PSNR for hybrid-capture Gaussian splatting.

Limitations and broader impact are discussed in Appendix~\ref{app:limitations}.

{
\small
\bibliographystyle{plainnat}
\bibliography{references}
}

\appendix

\section{Appendix Overview}
\label{app:overview}

This appendix provides implementation details, additional analysis, and extended
experiments supporting the main paper. We organize the appendix as follows:
\begin{itemize}
    \item Appendix~\ref{app:impl}: implementation details, training protocol,
    hyperparameters, and evaluation settings.
    \item Appendix~\ref{app:diagnostic}: derivation and scope of the
    projection-level gradient diagnostic.
    \item Appendix~\ref{app:grad_dynamics}: measured gradient dynamics,
    conflict-rate trajectories, and diagnostic visualizations.
    \item Appendix~\ref{app:ablations}: grouping sensitivity, operator variants,
    and magnitude-only ablations.
    \item Appendix~\ref{app:transfer}: backbone transfer, multi-seed results,
    and rendered-view budget controls.
    \item Appendix~\ref{app:qual}: additional qualitative comparisons.
    \item Appendix~\ref{app:extensions}: optional extensions, including
    confidence-gated projection and distance-conditioned attributes.
    \item Appendix~\ref{app:limitations}: limitations and broader impact.
\end{itemize}

\section{Implementation Details}
\label{app:impl}

\paragraph{Training protocol.}
All Vanilla-backbone experiments follow the public 3DGS training pipeline unless
otherwise stated. CrossGrad-GS keeps the Gaussian representation, rasterizer,
photometric loss, densification, and pruning unchanged. It modifies only the
training-time gradient aggregation rule. All main experiments are trained for
30K optimizer iterations.

\paragraph{Rendered-view budget.}
CrossGrad-GS renders one near view and one far view per optimizer iteration.
Thus, although it is trained for 30K optimizer iterations, it observes 60K
rendered training views. To separate the effect of gradient aggregation from the
effect of seeing more rendered views, the main paper reports Vanilla 3DGS at
60K iterations as a matched-rendered-view reference. We emphasize that Vanilla
60K is a budget control, not the training schedule used by CrossGrad-GS.

\paragraph{Near/far grouping.}
Unless otherwise stated, cameras are split by median radial distance from the
mean camera center:
\begin{equation}
  \mathcal{C}_{\mathrm{near}}
  =
  \{c:\|\mathbf{p}_c-\bar{\mathbf{p}}\|\leq r_{\mathrm{med}}\},
  \qquad
  \mathcal{C}_{\mathrm{far}}
  =
  \{c:\|\mathbf{p}_c-\bar{\mathbf{p}}\|> r_{\mathrm{med}}\}.
\end{equation}
This split is a simple proxy for visual scale and does not assume a known
gravity direction or altitude axis. As discussed in the main paper and
Appendix~\ref{app:ablations}, this grouping is a modeling choice rather than a
cosmetic implementation detail.

\paragraph{Gradient blocks.}
Projection is applied block-wise to shared Gaussian parameter tensors. Unless
otherwise stated, a block denotes one parameter tensor type over the currently
optimized Gaussian set, such as positions, scales, rotations, opacity, DC color,
and higher-order color coefficients. For numerical stability, all projection
denominators use a small constant $\epsilon$:
\begin{equation}
  \frac{\mathbf{g}_n^\top \mathbf{g}_f}{\|\mathbf{g}_f\|^2+\epsilon},
  \qquad
  \frac{\mathbf{g}_n^\top \mathbf{g}_f}{\|\mathbf{g}_n\|^2+\epsilon}.
\end{equation}

\paragraph{Optimizer and density control.}
Equation~\ref{eq:final_update} in the main paper is written in SGD form for
clarity. In implementation, the reconciled gradient
$\mathbf{g}'_n+\mathbf{g}'_f$ is assigned to the corresponding parameter tensor,
and the original 3DGS optimizer performs the update. Standard 3DGS
densification and pruning are retained without modification.

\paragraph{Baselines.}
We compare against Vanilla 3DGS, Scaffold-GS, Mip-Splatting,
Analytic-Splatting, Octree-GS, and Pixel-GS using official implementations when
available. For gradient-aggregation ablations, all variants use the same
near/far rendered-view budget unless otherwise noted.

\section{Additional Details on the Gradient Diagnostic}
\label{app:diagnostic}

The derivation below justifies the qualitative diagnostic used in the main
paper. It is not intended to be a complete model of 3DGS training. Visibility,
depth ordering, alpha compositing, densification, pruning, residual evolution,
and scene content differences introduce additional factors. We use the
diagnostic to motivate two testable predictions:
(i) projected geometric parameters should show stronger near/far imbalance than
amplitude-like parameters when the near/far grouping aligns with visual-scale
regimes; and
(ii) scalar magnitude correction alone should not resolve direction-level
near/far conflict.

\subsection{Local Projection Scaling}
\label{app:diagnostic_projection}

Consider a single 3D Gaussian with mean $\boldsymbol{\mu}$, covariance
$\boldsymbol{\Sigma}$, opacity $\alpha$, and color $\mathbf{c}$ observed from
camera distance $r$. Under a local linearization of perspective projection
$\pi$ around the Gaussian center, the projected covariance is
\begin{equation}
  \boldsymbol{\Sigma}^{\mathrm{2D}}
  \approx
  \mathbf{J}_{\pi}(\boldsymbol{\mu};r)
  \boldsymbol{\Sigma}
  \mathbf{J}_{\pi}(\boldsymbol{\mu};r)^{\top},
  \qquad
  \|\mathbf{J}_{\pi}(\boldsymbol{\mu};r)\| = O(f/r),
\end{equation}
where $f$ is the focal length. Therefore the projected footprint and the
gradient deposited on a shared Gaussian parameter depend on both distance and
parameter type.

\subsection{Parameter-Type-Dependent Leading Factors}
\label{app:diagnostic_types}

Let $\mathcal{L}(r)$ be the photometric loss from a camera at distance $r$.
For a shared Gaussian parameter $\theta$,
\begin{equation}
  \nabla_{\theta}\mathcal{L}(r)
  =
  \sum_{\mathbf{p}\in \mathcal{P}(r)}
  2\big(C(\mathbf{p})-C^*(\mathbf{p})\big)
  \nabla_{\theta}C(\mathbf{p},r),
\end{equation}
where $\mathcal{P}(r)$ is the set of pixels affected by the projected Gaussian.
Parameters that affect the projected footprint, such as position, scale, and
rotation, acquire projection-chain factors through
$\mathbf{J}_{\pi}(\boldsymbol{\mu};r)$. Parameters that act more directly as
amplitude multipliers, such as opacity and DC color, have weaker leading
projection dependence. In the idealized dense-sampling setting, this yields the
diagnostic scaling
\begin{equation}
  \frac{
  \mathbb{E}\!\left[\|\nabla_{\theta}\mathcal{L}(r_n)\|\right]
  }{
  \mathbb{E}\!\left[\|\nabla_{\theta}\mathcal{L}(r_f)\|\right]
  }
  \propto
  \left(\frac{r_f}{r_n}\right)^{d_{\theta}},
  \qquad r_n < r_f,
\end{equation}
with a larger leading exponent for projected geometric parameters than for
amplitude-like parameters.

\subsection{Scope of the Diagnostic}
\label{app:diagnostic_scope}

The diagnostic is used only to generate qualitative, falsifiable predictions.
The primary method does not depend on the exact exponent values. In particular,
CrossGrad-GS only uses the diagnostic to motivate:
\begin{itemize}
    \item separating near and far gradients before aggregation;
    \item comparing magnitude-only correction against direction-aware
    reconciliation;
    \item measuring whether geometry-related parameters show stronger near/far
    imbalance than opacity and color.
\end{itemize}
The diagnostic should not be interpreted as a rigorous theorem for the full
nonlinear 3DGS training trajectory.

\subsection{Diagnostic Predictions}
\label{app:falsifiable_preds}

We evaluate the diagnostic through the following qualitative predictions:
\begin{itemize}
    \item \textbf{P1: Parameter-type ordering.}
    When grouping aligns with visual-scale regimes, geometry-related parameter
    tensors should exhibit stronger near/far imbalance than amplitude-like
    tensors.
    \item \textbf{P2: Magnitude-only correction is incomplete.}
    A distance-based preconditioner should partially address magnitude imbalance
    but should not remove direction-level conflict.
    \item \textbf{P3: Direction-level conflict should be common.}
    A substantial fraction of shared parameter tensors should have negative
    near/far inner products during hybrid-capture training.
    \item \textbf{P4: Grouping affects diagnostic quality.}
    If the near/far split does not match visual-scale regimes, the measured
    parameter-type ordering can weaken or partially fail.
\end{itemize}

\section{Gradient Dynamics and Mechanism Evidence}
\label{app:grad_dynamics}

This section provides additional evidence for the mechanism discussed in the
main paper: CrossGrad-GS helps because hybrid capture induces both
parameter-type-dependent near/far imbalance and direction-level gradient
conflict.

\subsection{Measured Gradient Ratios}
\label{app:gradient_ratios}

Table~\ref{tab:grad_ratio_full} provides additional details behind the
diagnostic summary in the main paper. Ratios below one can occur in
well-converged regimes when residual statistics dominate the idealized
projection factor. Therefore, we interpret the diagnostic primarily as an
ordering prediction rather than an exact magnitude prediction.

\begin{table}[h]
\centering
\small
\caption{
Detailed near/far gradient-ratio measurements. The diagnostic predicts that
geometry-related parameters should show stronger near/far imbalance than
opacity/DC color when the near/far grouping aligns with visual-scale regimes.
}
\label{tab:grad_ratio_full}
\setlength{\tabcolsep}{5pt}
\begin{tabular}{@{}lcccc@{}}
\toprule
Dataset
& $R_{\mathrm{pos}}$
& $R_{\mathrm{op+DC}}$
& Ordering
& Interpretation \\
\midrule
HorizonGS Road & 3.96 & 1.34 & yes & strong imbalance \\
MatrixCity & 6.0 & 2.1 & yes & strong conflict \\
HorizonGS Park & 4.2 & 1.8 & yes & dense coverage damps gain \\
UC-GS NYC & 0.92 & 0.85 & yes, damped & well-converged \\
UC-GS SF & 1.0 & 0.9 & partial & grouping-sensitive \\
\bottomrule
\end{tabular}
\end{table}

\subsection{Diagnostic Scatter}
\label{app:theorem_scatter}

\begin{figure}[h]
\centering
\includegraphics[width=0.62\linewidth]{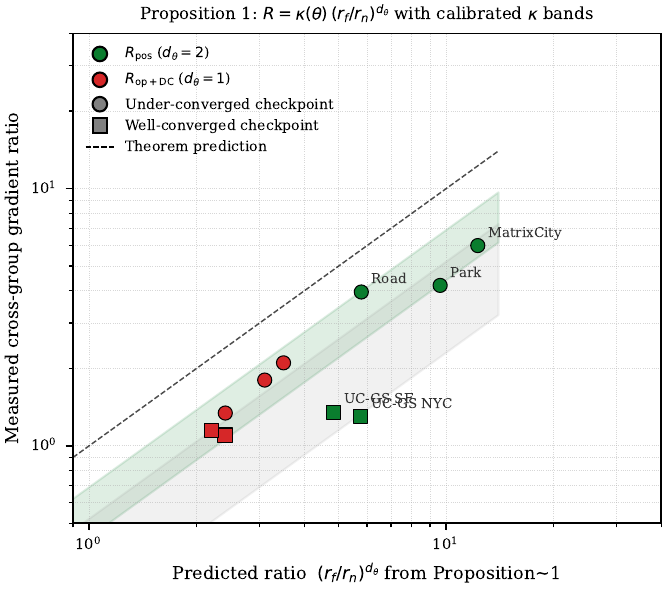}
\caption{
Diagnostic prediction versus measured gradient ratios across hybrid-capture
benchmarks and parameter types. Under-converged points follow the expected
projection-geometric trend more closely, while well-converged points are damped
toward one by residual decoupling. UC-GS SF shows partial violation, consistent
with imperfect near/far grouping.
}
\label{fig:theorem_scatter}
\end{figure}

\subsection{Conflict-Rate Trajectories}
\label{app:conflict_rate}

\begin{figure}[h]
\centering
\includegraphics[width=0.75\linewidth]{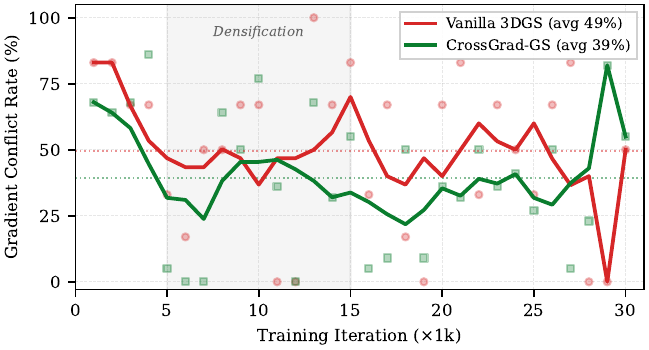}
\caption{
Gradient conflict rate during training on HorizonGS Road. A substantial fraction
of shared Gaussian parameter tensors have negative near/far inner products
throughout optimization, confirming that hybrid-capture failure is not only a
magnitude imbalance.
}
\label{fig:conflict_rate_iter}
\end{figure}

\subsection{Gradient-Ratio Trajectories}
\label{app:grad_trajectory}

\begin{figure}[h]
\centering
\includegraphics[width=0.75\linewidth]{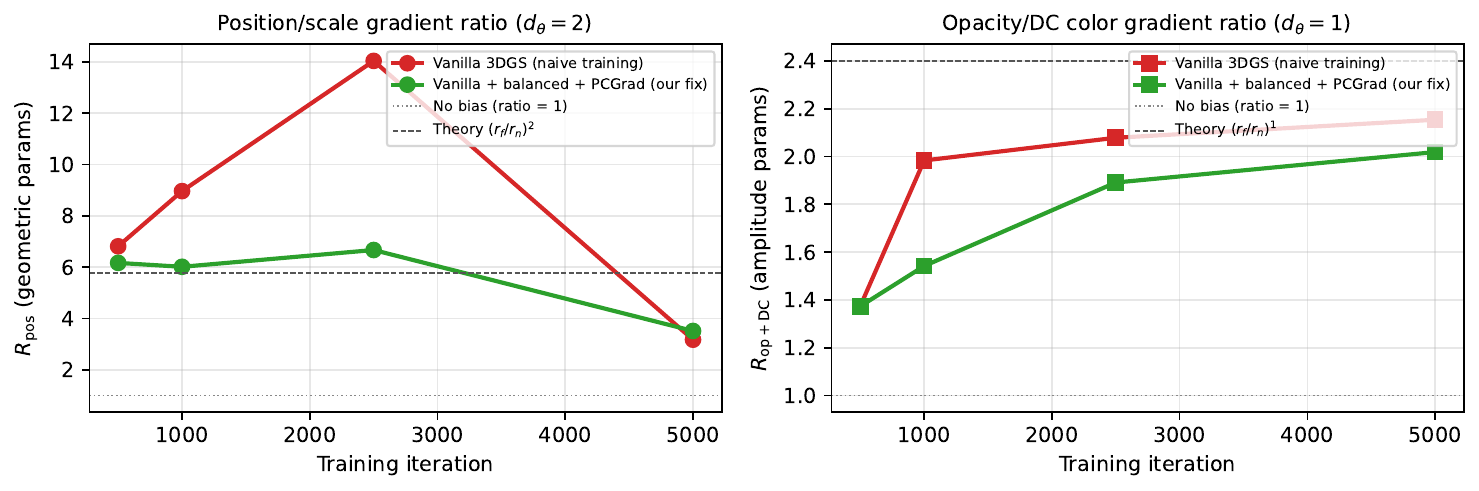}
\caption{
Gradient magnitude ratio trajectories on HorizonGS Road. Vanilla training can
move far from the diagnostic range during early optimization, while
direction-aware near/far aggregation keeps the ratio closer to the expected
projection-geometric regime. The figure is intended as mechanism evidence, not
as an exact verification of the idealized scaling law.
}
\label{fig:grad_trajectory}
\end{figure}

\subsection{Distance Variance and Conflict}
\label{app:variance_conflict}

\begin{figure}[h]
\centering
\includegraphics[width=0.75\linewidth]{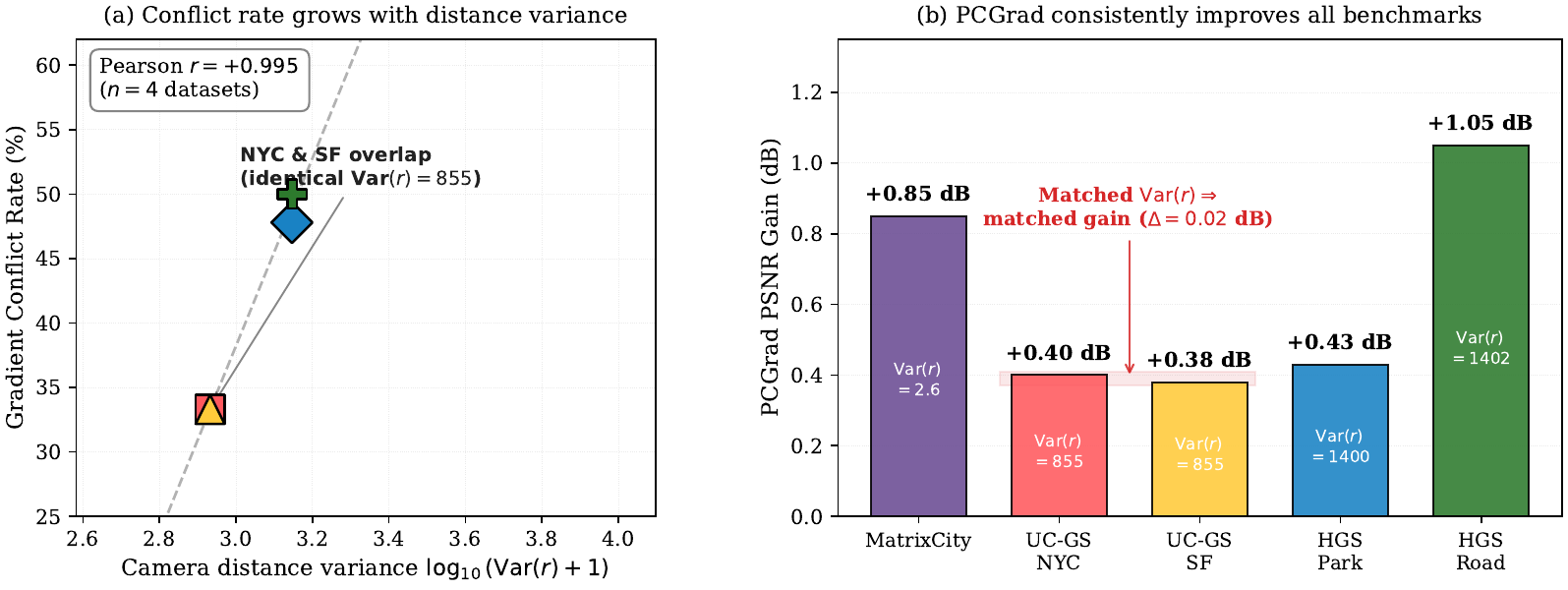}
\caption{
Evidence for the empirical observation that larger camera-distance variance is
associated with more frequent near/far gradient conflict and larger gains from
direction-aware aggregation. Because the number of scenes is small, we treat
this as supporting evidence rather than a statistical claim.
}
\label{fig:variance_vs_gain}
\end{figure}

\section{Additional Ablations and Grouping Sensitivity}
\label{app:ablations}

This section expands on the main-paper ablations. The goal is not to show that
CrossGrad-GS dominates every possible gradient-reconciliation method, but to
characterize the optimization axis: direction-aware near/far aggregation is more
effective than magnitude-only correction on high-imbalance scenes, while the
near/far grouping remains a central design choice.

\subsection{Partition Threshold Sensitivity}
\label{app:partition_threshold}

\begin{table}[h]
\centering
\small
\caption{
Partition-threshold and group-count sensitivity. The median split is the
default used in the main paper. Changing the threshold affects performance,
showing that grouping is a real modeling choice.
}
\label{tab:partition_sensitivity}
\begin{tabular}{@{}lcccc@{}}
\toprule
Configuration & Road & $\Delta_{\mathrm{Road}}$ & NYC & $\Delta_{\mathrm{NYC}}$ \\
\midrule
2-way 30th percentile & 20.24 & $-0.47$ & 26.44 & $-0.08$ \\
2-way \textbf{50th percentile, default} & \textbf{20.71} & $0$ & \textbf{26.52} & $0$ \\
2-way 70th percentile & 20.47 & $-0.24$ & 26.38 & $-0.14$ \\
\midrule
3-way balanced sampling & 19.78 & $-0.93$ & 26.08 & $-0.44$ \\
\bottomrule
\end{tabular}
\end{table}

\subsection{Alternative Grouping Signals}
\label{app:grouping_alternatives}

\begin{table}[h]
\centering
\small
\caption{
Grouping sensitivity. Alternative grouping signals can slightly improve
partition-sensitive cases, but no tested signal uniformly dominates the median
radial split. Values report PSNR.
}
\label{tab:grouping_full}
\setlength{\tabcolsep}{5pt}
\begin{tabular}{@{}lcccc@{}}
\toprule
Grouping signal & SF & Road & NYC & MatrixCity \\
\midrule
Median radial default & 25.14 & 20.71 & 26.52 & 22.26 \\
Best alternative observed & 25.28 & 20.86 & 26.57 & 22.27 \\
$\Delta$ & +0.14 & +0.15 & +0.05 & +0.01 \\
\bottomrule
\end{tabular}
\end{table}

These results support two conclusions. First, grouping quality matters: the
binary partition defines which gradients are compared and reconciled. Second,
simply replacing the median split with another heuristic does not fully solve
partition sensitivity; better regime discovery is an important future direction.

\subsection{Operator Variants}
\label{app:operator_variants}

\begin{table}[h]
\centering
\small
\caption{
Operator-level and grouping ablations on scenes most sensitive to gradient
aggregation. CrossGrad-GS is the primary symmetric-projection rule. Additional
operator variants are useful for analysis but do not consistently improve the
primary method.
}
\label{tab:operator_grouping}
\setlength{\tabcolsep}{4pt}
\begin{tabular}{@{}lccc@{}}
\toprule
Variant & UC-GS SF & HorizonGS Road & MatrixCity \\
\midrule
\textbf{CrossGrad-GS} (primary symmetric projection)
& 25.14 & 20.71 & \textbf{22.26} \\
\midrule
\multicolumn{4}{l}{\emph{Confidence-gated projection}} \\
ConfGate-v2
& \textbf{25.26} & \textbf{20.82} & 22.24 \\
Lower threshold
& 25.18 & 20.73 & 22.13 \\
\midrule
\multicolumn{4}{l}{\emph{Per-block dispatch}} \\
Geometry-only ConfGate, appearance no projection
& 25.26 & 20.81 & 22.08 \\
Geometry CrossGrad-GS, appearance ConfGate
& 25.23 & 20.43 & 22.11 \\
\midrule
\multicolumn{4}{l}{\emph{Anchor grouping}} \\
Closest/farthest 30\% as anchors, middle photometric-only
& 24.91 & 20.33 & --- \\
\bottomrule
\end{tabular}
\end{table}

The confidence-gated variants can slightly improve SF and Road, but the gains
are small and scene-dependent. We therefore use confidence gating only as an
optional analysis variant, not as part of the primary method.

\subsection{Magnitude-Only Preconditioning}
\label{app:precond}

The magnitude-only preconditioner rescales gradients by distance and parameter
type:
\begin{equation}
  \tilde{\mathbf{g}}_\theta(r_c)
  =
  \left(\frac{r_c}{r_{\mathrm{ref}}}\right)^{d_\theta}
  \mathbf{g}_\theta(r_c),
\end{equation}
where $d_\theta$ follows the idealized diagnostic. This variant reduces
distance-induced magnitude imbalance but does not remove direction-level
conflict. As shown in the main-paper ablation, it recovers only part of the
gain on high-imbalance scenes.

\subsection{Pseudocode}
\label{app:pseudocode}

\begin{algorithm}[h]
\caption{Primary CrossGrad-GS training recipe}
\label{alg:crossgrad}
\begin{algorithmic}[1]
\REQUIRE Training cameras $\mathcal{C}$, Gaussian parameters $\Theta$
\STATE Initialize Gaussians from SfM points
\STATE Compute camera distances $\|\mathbf{p}_c-\bar{\mathbf{p}}\|$
\STATE Split cameras into $\mathcal{C}_{\mathrm{near}}$ and
$\mathcal{C}_{\mathrm{far}}$ by median distance
\FOR{$t=1$ to $T$}
  \STATE Sample $c_n\sim \mathcal{C}_{\mathrm{near}}$ and
  $c_f\sim \mathcal{C}_{\mathrm{far}}$
  \STATE Render $c_n$, compute $\mathcal{L}_n$, backward to get
  $\mathbf{g}_n$; store gradients and zero buffers
  \STATE Render $c_f$, compute $\mathcal{L}_f$, backward to get
  $\mathbf{g}_f$
  \FOR{each shared parameter block $\theta$}
    \IF{$\mathbf{g}_n^\top\mathbf{g}_f < 0$}
      \STATE Apply symmetric projection from
      Eq.~\ref{eq:proj_near} and Eq.~\ref{eq:proj_far}
    \ENDIF
    \STATE Set $\theta.\mathrm{grad}\leftarrow \mathbf{g}'_n+\mathbf{g}'_f$
  \ENDFOR
  \STATE Update parameters with the original 3DGS optimizer
  \STATE Apply standard 3DGS densification/pruning when scheduled
\ENDFOR
\end{algorithmic}
\end{algorithm}

\section{Backbone Transfer, Seeds, and Budget Controls}
\label{app:transfer}

\subsection{Scaffold-GS Transfer}
\label{app:scaffold_transfer}

\begin{table}[h]
\centering
\small
\caption{
Backbone transfer on Scaffold-GS. Values report PSNR gain over Scaffold-GS.
CrossGrad-GS improves all five scenes across three seeds, indicating that the
gradient aggregation axis is complementary to representation-level improvements.
}
\label{tab:scaffold_transfer_full}
\setlength{\tabcolsep}{5pt}
\begin{tabular}{@{}lccccc@{}}
\toprule
Backbone + CrossGrad-GS & NYC & SF & MatrixCity & Road & Park \\
\midrule
Scaffold-GS gain, $n=3$
& +1.56 & +1.39 & +3.62 & +2.26 & +2.01 \\
\bottomrule
\end{tabular}
\end{table}

Across the five scenes, the standard deviation is at most $0.11$\,dB.

\subsection{Pixel-GS Transfer}
\label{app:pixel_transfer}

\begin{table}[h]
\centering
\small
\caption{
Full Pixel-GS transfer results. CrossGrad-GS improves the high-imbalance
MatrixCity scene but is mixed overall, indicating that transfer depends on
backbone dynamics and scene regime structure.
}
\label{tab:pixel_transfer_full}
\setlength{\tabcolsep}{5pt}
\begin{tabular}{@{}lccccc@{}}
\toprule
Method & NYC & SF & MatrixCity & Road & Park \\
\midrule
Pixel-GS + CrossGrad-GS gain
& $-0.23$ & $+0.12$ & $+1.42$ & $+0.13$ & $-1.29$ \\
\bottomrule
\end{tabular}
\end{table}

We report these results to clarify the scope of the transfer claim. CrossGrad-GS
is not uniformly beneficial on every backbone and scene; its strongest transfer
effect appears when the underlying backbone still exhibits high near/far
optimization imbalance.

\subsection{Multi-Seed Results}
\label{app:seeds}

\begin{table}[h]
\centering
\small
\caption{
Primary training-only CrossGrad-GS multi-seed PSNR. Standard deviations are
small relative to the main gains on high-imbalance scenes.
}
\label{tab:multiseed_primary}
\begin{tabular}{@{}lcccc@{}}
\toprule
Scene & Seed 1 & Seed 2 & Seed 3 & Mean $\pm$ Std \\
\midrule
UC-GS NYC & 26.52 & 26.54 & 26.56 & $26.54\pm0.02$ \\
MatrixCity & 22.26 & 22.01 & 22.08 & $22.12\pm0.13$ \\
HorizonGS Road & 20.71 & 20.66 & 20.76 & $20.71\pm0.05$ \\
\bottomrule
\end{tabular}
\end{table}

\subsection{Matched Rendered-View Controls}
\label{app:budget_controls}

The main paper includes Vanilla 3DGS at 60K iterations as a
matched-rendered-view reference because CrossGrad-GS renders two views per
optimizer iteration. This control is intended to answer whether gains arise
merely from rendering more training views. CrossGrad-GS outperforms the 60K
Vanilla reference on NYC, MatrixCity, and HorizonGS Road, while being marginal
on Park and negative on SF. This supports a narrower claim: CrossGrad-GS is most
effective on high-imbalance scenes where the near/far grouping matches the
visual regimes.

\section{Additional Qualitative Results}
\label{app:qual}

We separate qualitative results for the primary training-only method from
optional extensions. This avoids conflating the main CrossGrad-GS claim with
representation-side distance conditioning.

\subsection{Primary Training-Only CrossGrad-GS}
\label{app:qual_primary}

\begin{figure}[h]
\centering
\setlength{\tabcolsep}{1pt}
\begin{tabular}{ccccc}
\small GT & \small Vanilla 3DGS & \small Mip-Splatting & \small Scaffold-GS & \small \textbf{CrossGrad-GS} \\
\includegraphics[width=0.18\linewidth]{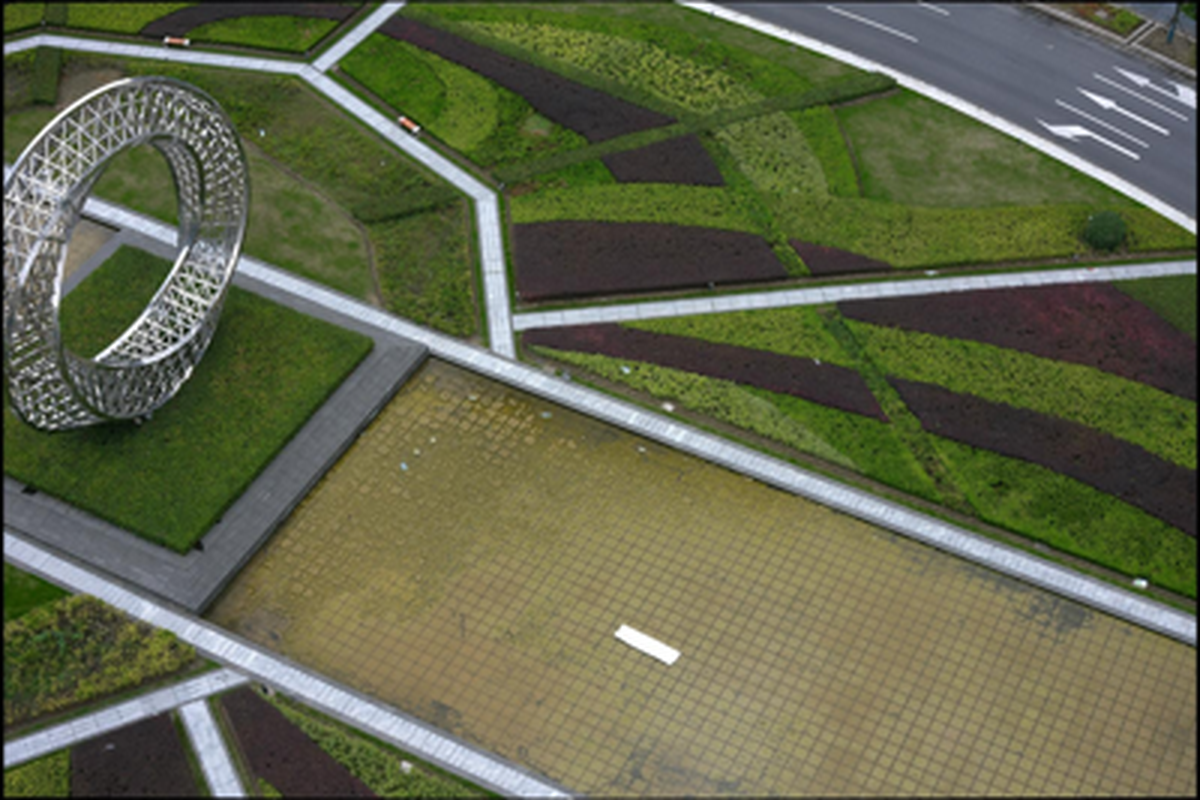} &
\includegraphics[width=0.18\linewidth]{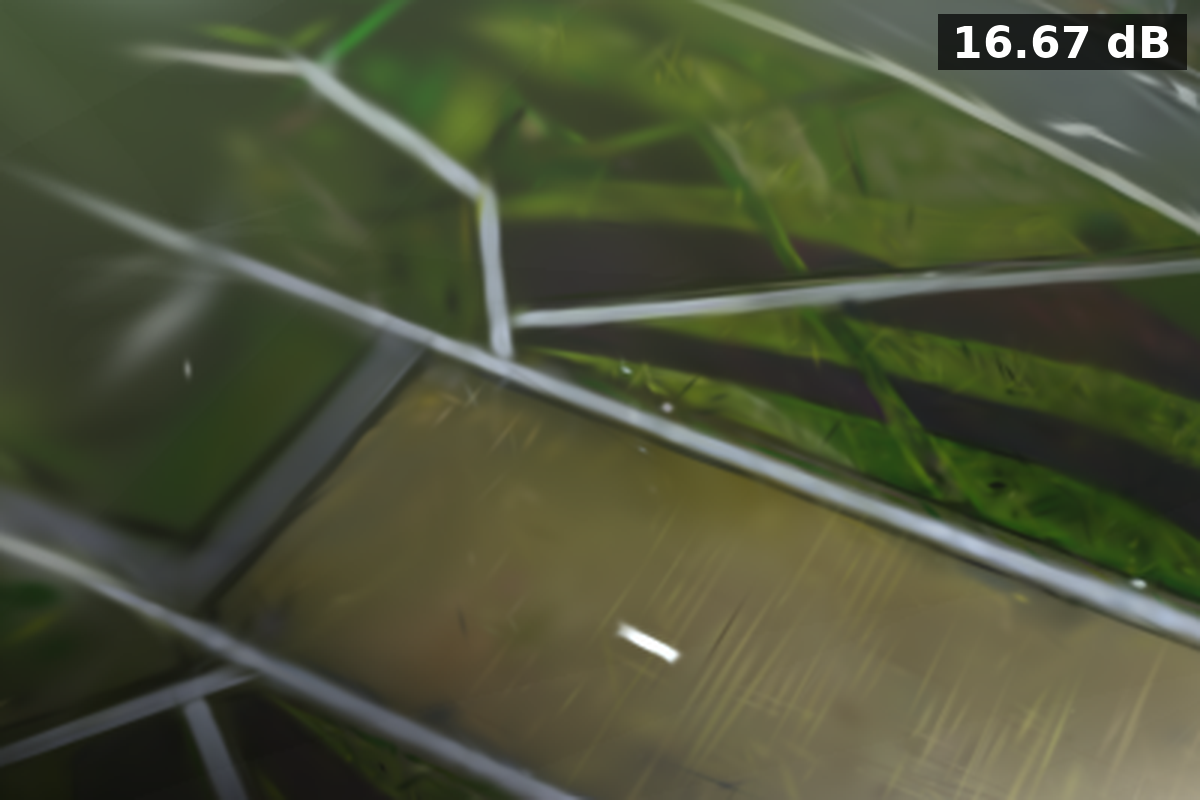} &
\includegraphics[width=0.18\linewidth]{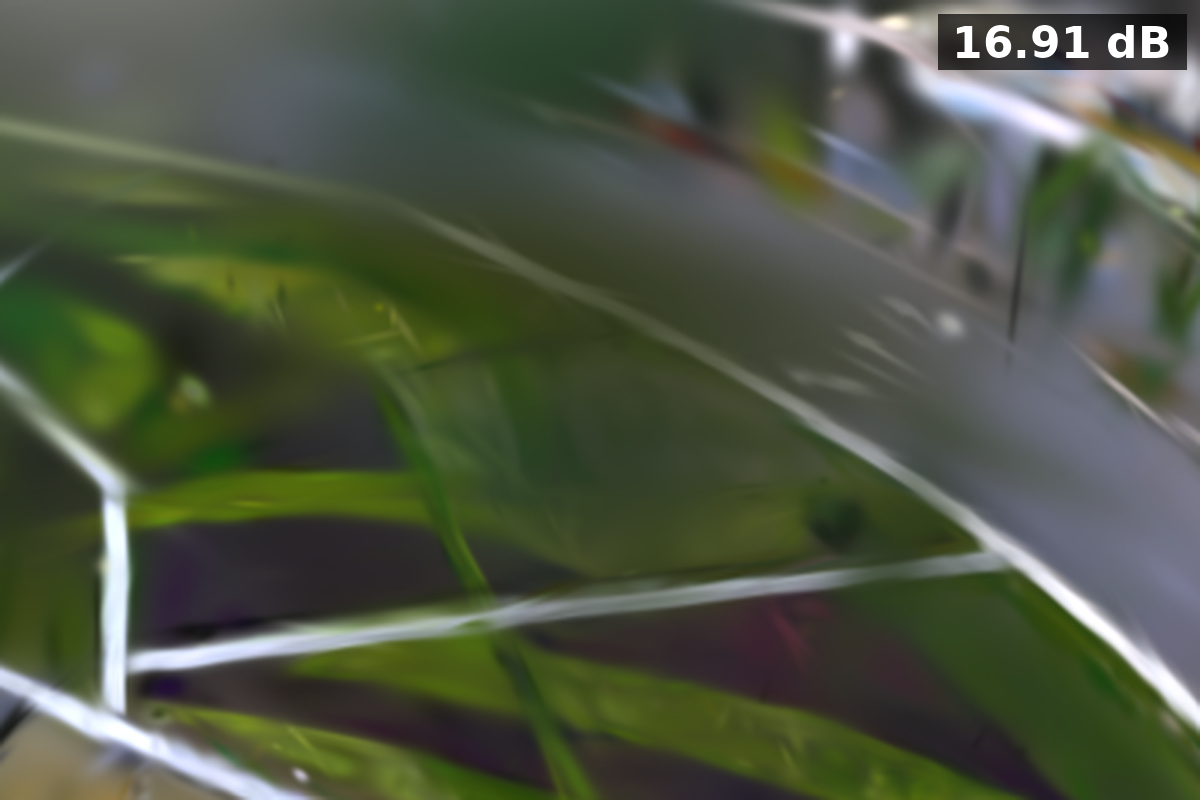} &
\includegraphics[width=0.18\linewidth]{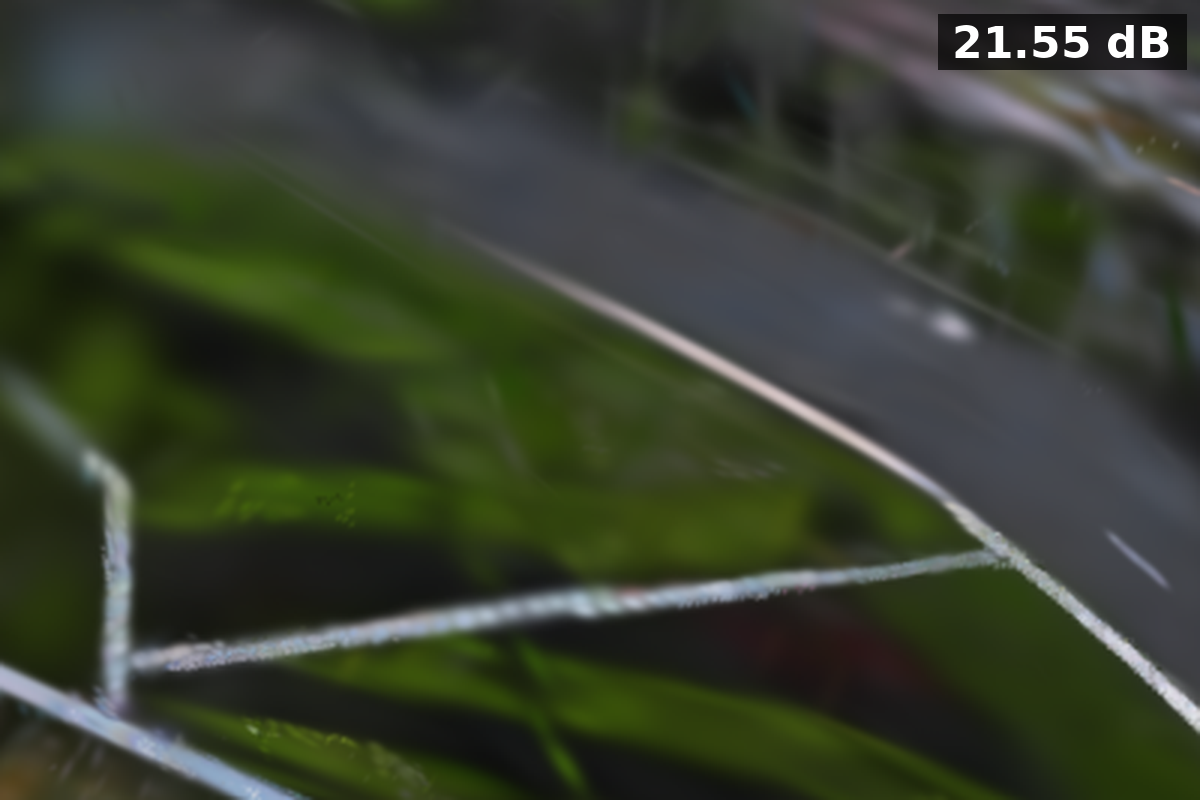} &
\includegraphics[width=0.18\linewidth]{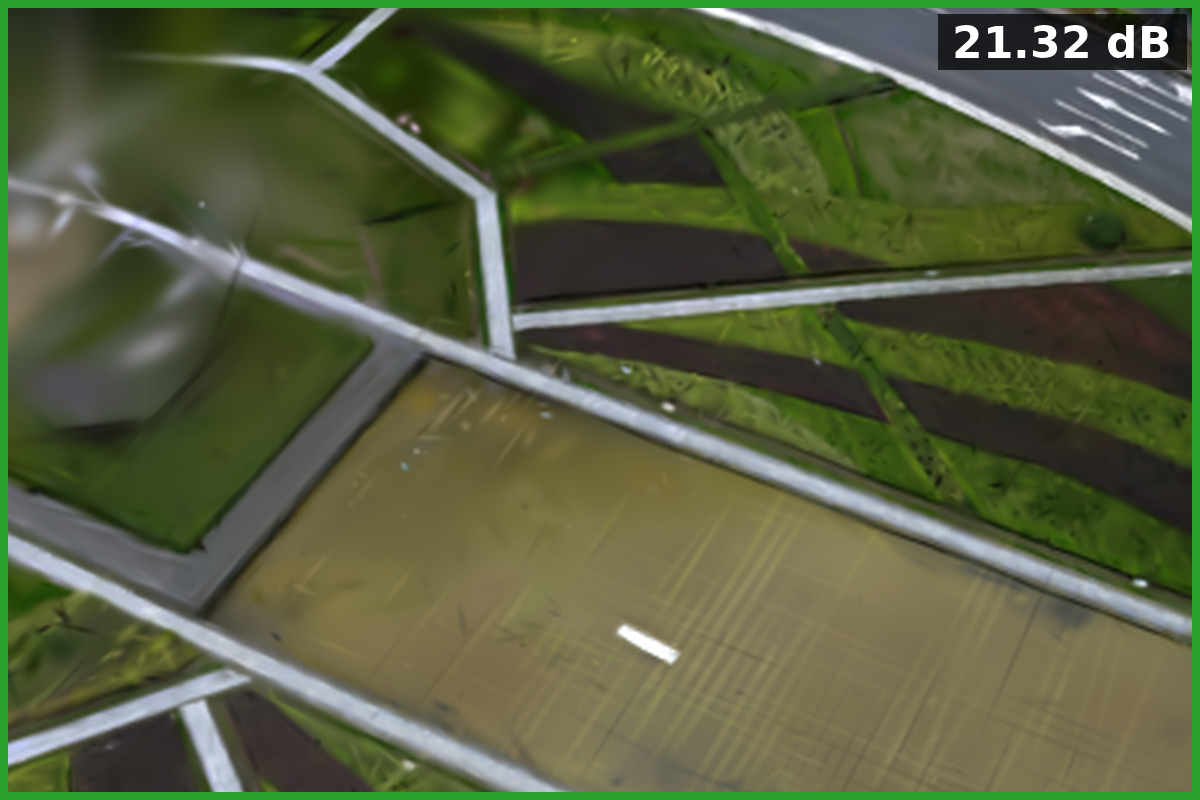} \\
\multicolumn{5}{c}{\small (a) HorizonGS Road, near view} \\[4pt]
\includegraphics[width=0.18\linewidth]{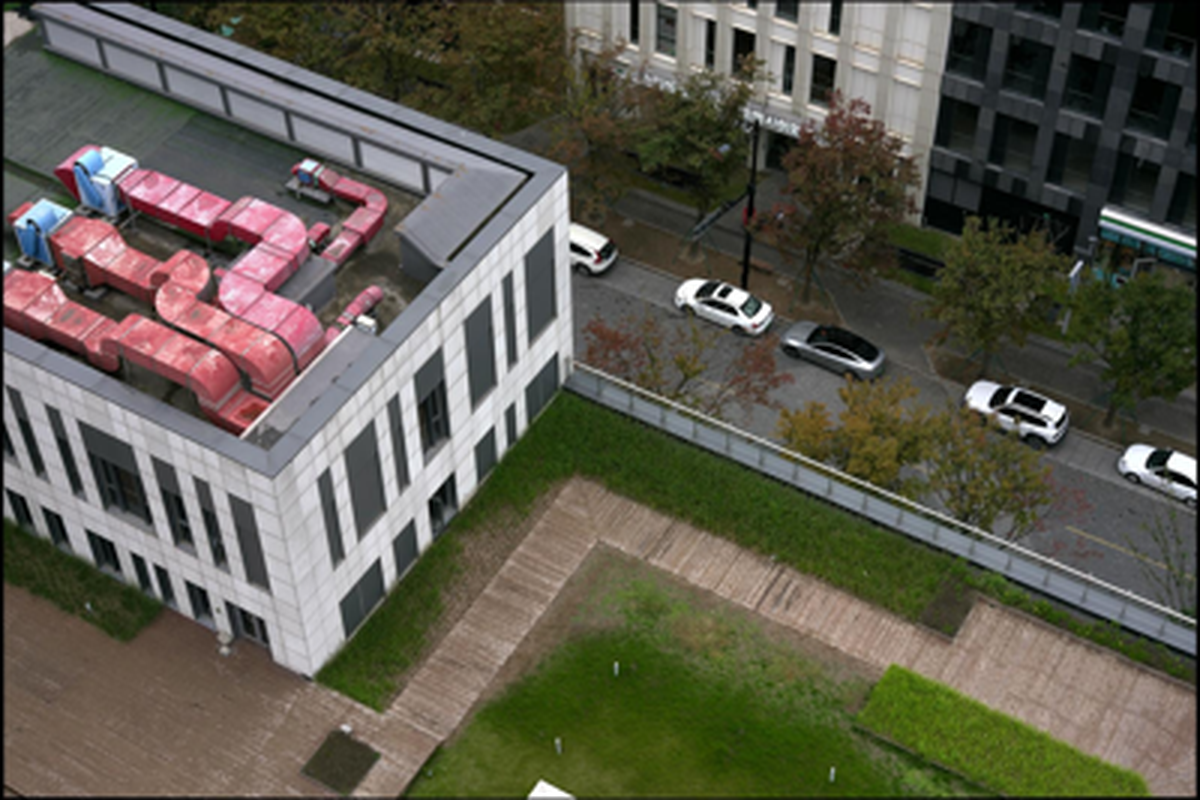} &
\includegraphics[width=0.18\linewidth]{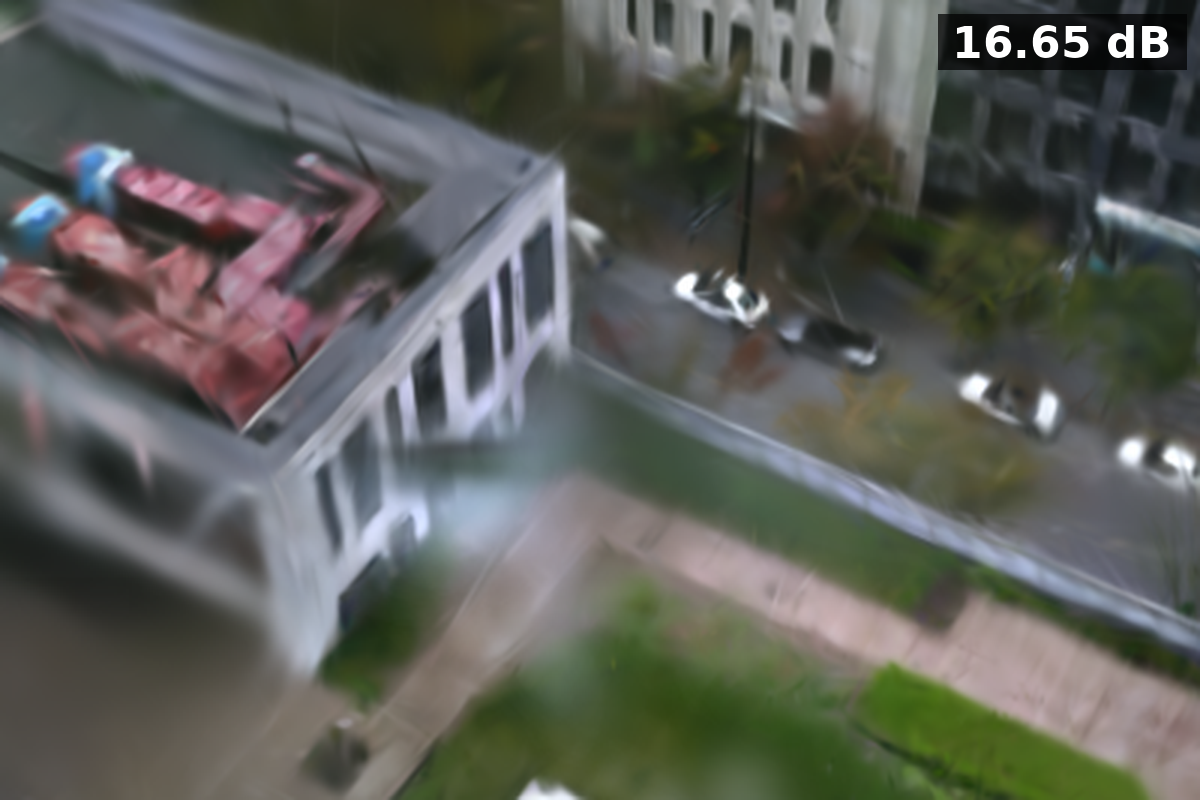} &
\includegraphics[width=0.18\linewidth]{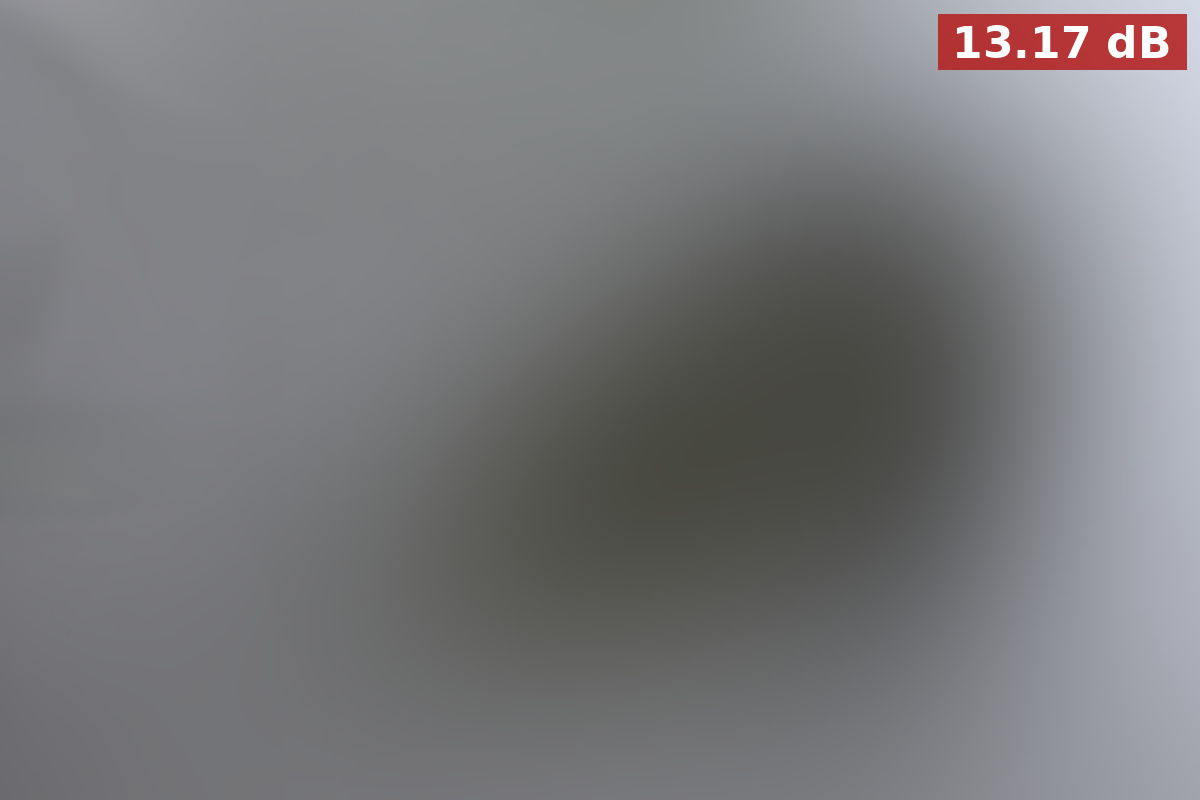} &
\includegraphics[width=0.18\linewidth]{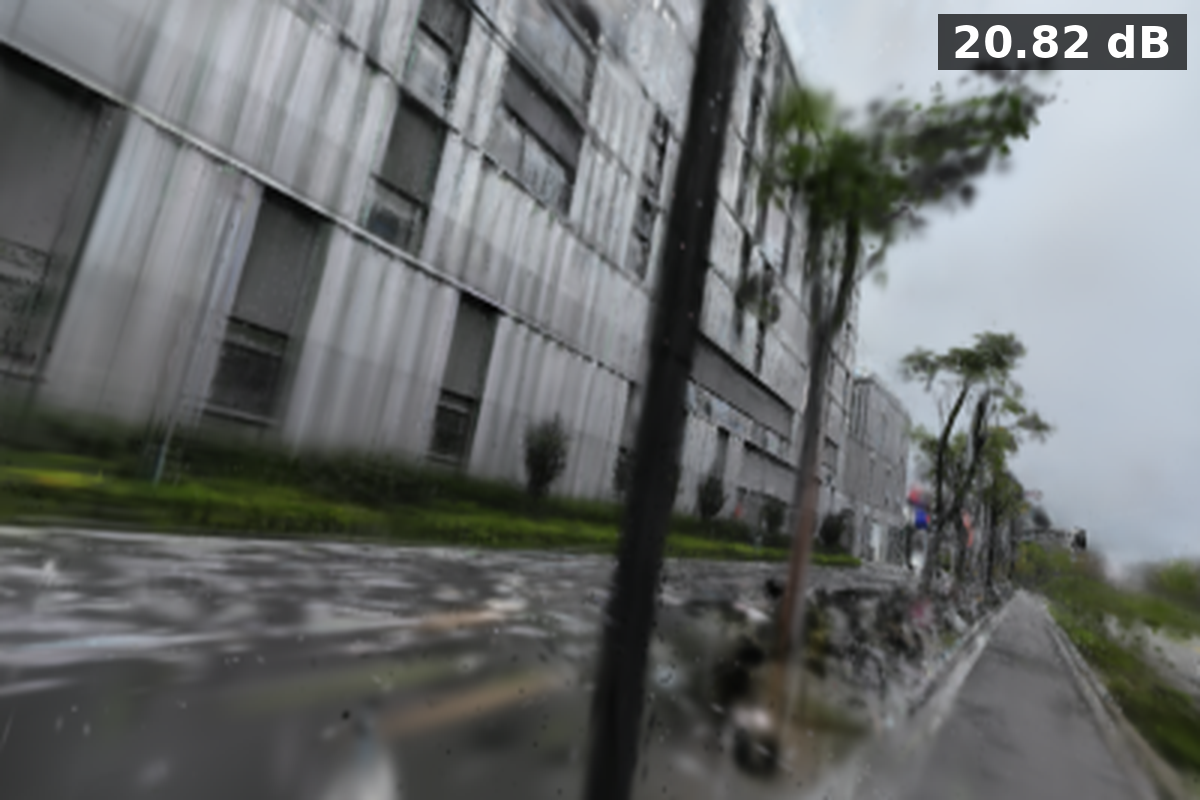} &
\includegraphics[width=0.18\linewidth]{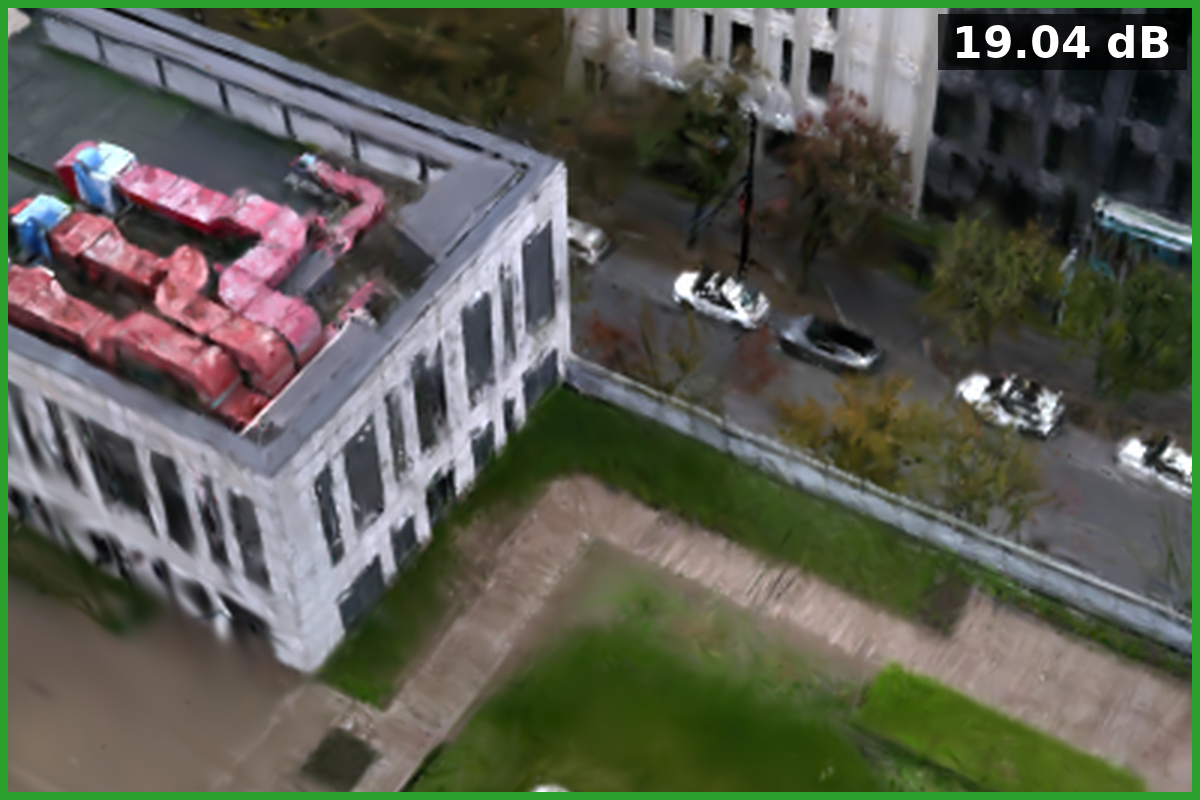} \\
\multicolumn{5}{c}{\small (b) HorizonGS Road, far view} \\
\end{tabular}
\caption{
Qualitative comparison on HorizonGS Road using the primary training-only
CrossGrad-GS method. CrossGrad-GS preserves structure in both near and far
views without changing the Gaussian representation.
}
\label{fig:road_qual_primary}
\end{figure}

\begin{figure}[h]
\centering
\includegraphics[width=\linewidth]{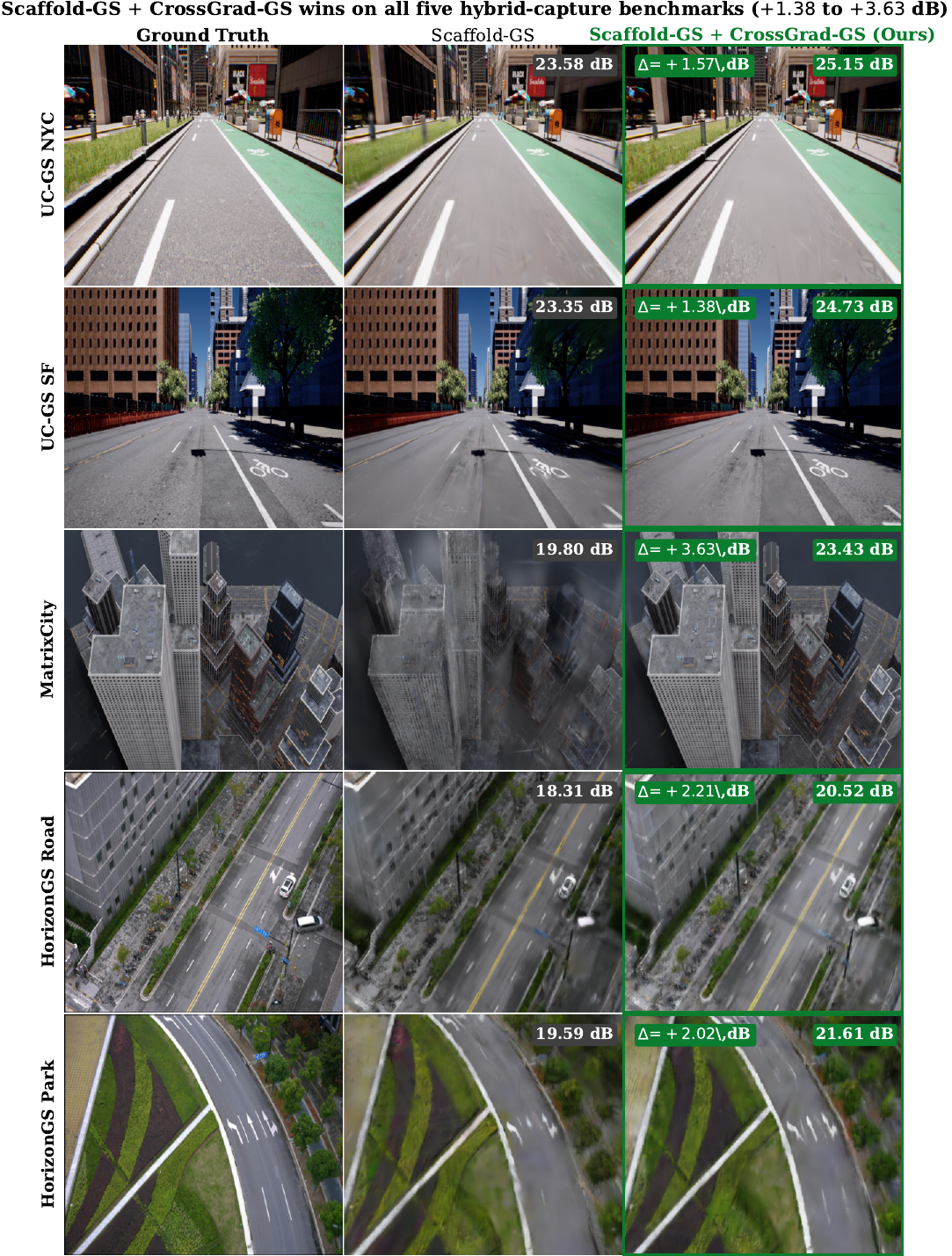}
\caption{
Backbone-agnostic qualitative comparison on Scaffold-GS. Applying the same
near/far gradient aggregation rule on top of Scaffold-GS improves all five
hybrid-capture scenes, supporting the claim that CrossGrad-GS is complementary
to representation-level improvements.
}
\label{fig:balanced_comparison}
\end{figure}

\subsection{Optional Distance-Conditioned Extension}
\label{app:qual_optional}

\begin{figure}[h]
\centering
\includegraphics[width=0.95\linewidth]{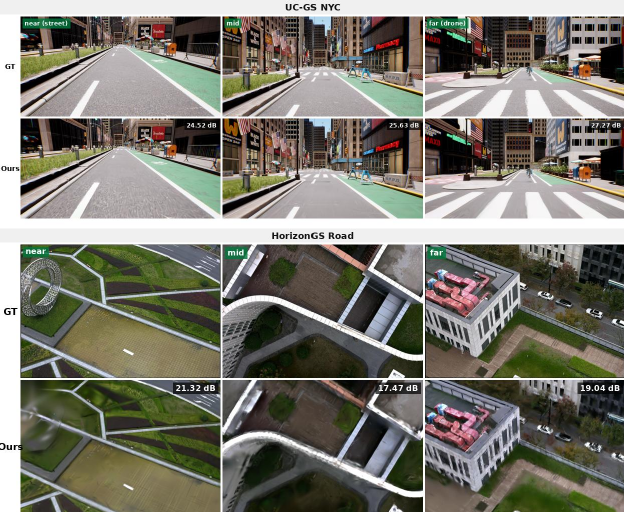}
\caption{
Optional distance-conditioned extension. This figure uses CrossGrad-GS with a
distance-conditioned attribute extension and is not part of the primary
training-only claim. It is included to show that representation-side distance
conditioning can be complementary to gradient-balanced training.
}
\label{fig:cross_altitude}
\end{figure}

\begin{figure}[h]
\centering
\includegraphics[width=0.95\linewidth]{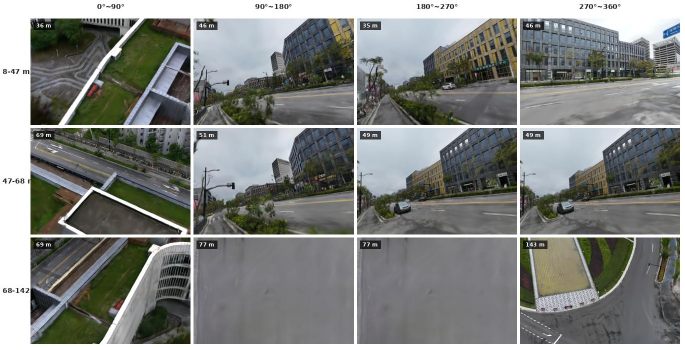}
\caption{
Additional rendering views from the optional distance-conditioned extension on
HorizonGS Road. The result illustrates smooth rendering across multiple
viewpoints, but should be interpreted as an extension rather than the primary
CrossGrad-GS method.
}
\label{fig:scene_3d_grid}
\end{figure}

\section{Optional Extensions}
\label{app:extensions}

The primary CrossGrad-GS method leaves the Gaussian representation unchanged.
The variants in this section are diagnostic extensions. They help characterize
noise filtering or representation-side complementarity, but they are not used to
define the primary method or the main tables.

\subsection{Confidence-Gated Projection}
\label{app:confgate}

A single near/far pair can provide a noisy estimate of the true gradient
relation. The confidence-gated variant smooths per-block cosine statistics with
an exponential moving average and applies projection only when the estimated
conflict probability is high. As shown in Table~\ref{tab:operator_grouping},
this can slightly improve SF and Road, but the effect is small and
scene-dependent. We therefore treat confidence gating as an optional analysis
variant rather than part of the primary method.

\subsection{Distance-Conditioned Attributes}
\label{app:dc_ext}

The distance-conditioned extension allows selected Gaussian attributes to depend
on log-distance. This extension can help in scenes where a single fixed
attribute set struggles to cover a broad range of observation distances.
However, it changes the representation-side capacity and is therefore separate
from the primary CrossGrad-GS claim.

\paragraph{Regularization.}
We apply an $L_2$ penalty on distance-conditioned weights and a log-distance
smoothness term:
\begin{equation}
  \mathcal{L}_{\mathrm{reg}}
  =
  \mathcal{L}_{\mathrm{feat}}
  +
  \mathcal{L}_{\mathrm{smooth}}.
\end{equation}
The regularization weight is increased progressively during training.

\paragraph{Attribute modulation.}
For each Gaussian, selected attributes can be conditioned on a distance encoding
$\mathbf{D}(r)$:
\begin{align}
  \mathbf{s}_i(r)
  &=
  \exp\!\left(\mathbf{s}_i^{\mathrm{raw}}
  + (\mathbf{w}_i^s)^\top \mathbf{D}(r)\right), \\
  \alpha_i(r)
  &=
  \sigma\!\left(\alpha_i^{\mathrm{raw}}
  + (\mathbf{w}_i^\alpha)^\top \mathbf{D}(r)\right), \\
  \mathbf{c}^{\mathrm{DC}}_i(r)
  &=
  \mathbf{c}^{\mathrm{DC,raw}}_i
  +
  \mathbf{W}_i^c \mathbf{D}(r).
\end{align}

\subsection{Why These Variants Are Not Primary}
\label{app:why_not_primary}

The primary contribution of CrossGrad-GS is training-time near/far gradient
aggregation without changing the representation or density-control pipeline.
Confidence gating and distance conditioning can be useful for analysis or
specific scene regimes, but they are not required to define the method. The main
paper therefore reports them as optional variants or appendix studies.

\section{Limitations and Broader Impact}
\label{app:limitations}

\paragraph{Grouping sensitivity.}
CrossGrad-GS assumes that the chosen near/far partition approximates the
visual-scale regimes that produce conflicting gradients. This assumption can
fail in anisotropic camera layouts, as observed on UC-GS SF. In such cases,
balanced sampling can become harmful because it emphasizes a partition that does
not correspond to the true optimization regimes. Better automatic regime
discovery is an important direction for future work.

\paragraph{Operator specificity.}
Our results do not show that symmetric projection universally dominates generic
gradient-reconciliation methods. CAGrad is competitive on several scenes. The
main conclusion is that direction-aware near/far aggregation is useful for
hybrid-capture training, not that one projection operator is always optimal.

\paragraph{Diagnostic scope.}
The projection-level diagnostic is an idealized guide. It predicts qualitative
parameter-type ordering and motivates magnitude-only baselines, but it is not a
complete model of full 3DGS training dynamics.

\paragraph{Computational cost.}
CrossGrad-GS renders one near view and one far view per optimizer iteration.
This increases rendered-view cost relative to Vanilla 3DGS at the same number of
optimizer iterations. We therefore include matched-rendered-view references in
the main paper.

\paragraph{Broader impact.}
Hybrid aerial--ground reconstruction can support mapping, simulation, robotics,
and digital-twin applications. It can also increase the fidelity of
reconstructions in surveillance-sensitive settings. We do not introduce new data
collection mechanisms, but improved reconstruction quality may amplify existing
privacy and dual-use concerns associated with aerial and street-level imagery.

\paragraph{Compute resources.}
All experiments were run on NVIDIA RTX 3090 GPUs with 24GB memory. A
CrossGrad-GS run uses the same Gaussian representation and optimizer as the
corresponding backbone, but renders two views per optimizer iteration: one near
view and one far view. Therefore, its rendered-view cost is approximately twice
that of Vanilla 3DGS at the same optimizer-iteration count. To account for this,
we report Vanilla 3DGS at 60K iterations as a matched-rendered-view reference
for the 30K CrossGrad-GS runs. The exact wall-clock time varies by scene size
and backbone; all reported experiments are feasible on a single RTX 3090 GPU
unless otherwise noted.

\clearpage
\newpage
\section*{NeurIPS Paper Checklist}

\begin{enumerate}

\item {\bf Claims}
    \item[] Question: Do the main claims made in the abstract and introduction accurately reflect the paper's contributions and scope?
    \item[] Answer: \answerYes{}.
    \item[] Justification: The abstract and introduction state that CrossGrad-GS addresses a training-side near/far gradient aggregation problem in hybrid-capture Gaussian splatting. The claims are scoped to high-imbalance hybrid-capture scenes and explicitly acknowledge grouping sensitivity rather than claiming uniform improvement across all scenes.
    \item[] Guidelines:
    \begin{itemize}
        \item The answer NA means that the abstract and introduction do not include the claims made in the paper.
        \item The abstract and/or introduction should clearly state the claims made, including the contributions made in the paper and important assumptions and limitations.
        \item The claims made should match theoretical and experimental results, and reflect how much the results can be expected to generalize to other settings.
    \end{itemize}

\item {\bf Limitations}
    \item[] Question: Does the paper discuss the limitations of the work performed by the authors?
    \item[] Answer: \answerYes{}.
    \item[] Justification: The paper discusses limitations in the main experiments and appendix, including dependence on the near/far grouping, the UC-GS SF partition-sensitive case, the fact that generic gradient-reconciliation methods can be competitive, and the increased rendered-view budget.
    \item[] Guidelines:
    \begin{itemize}
        \item The authors are encouraged to create a separate ``Limitations'' section in their paper.
        \item The paper should point out any strong assumptions and how robust the results are to violations of these assumptions.
        \item The authors should discuss the computational efficiency of the proposed algorithms and how they scale with dataset size.
    \end{itemize}

\item {\bf Theory assumptions and proofs}
    \item[] Question: For each theoretical result, does the paper provide the full set of assumptions and a complete proof?
    \item[] Answer: \answerYes{}.
    \item[] Justification: The projection-level result is presented as a diagnostic under idealized projection assumptions rather than as a complete theory of 3DGS training. The assumptions and derivation are provided in the appendix, and the main paper states that the diagnostic is used for qualitative predictions rather than exact gradient-magnitude prediction.
    \item[] Guidelines:
    \begin{itemize}
        \item The answer NA means that the paper does not include theoretical results.
        \item All assumptions should be clearly stated or referenced in the statement of any theorems.
        \item The proofs can either appear in the main paper or the supplemental material.
    \end{itemize}

\item {\bf Experimental result reproducibility}
    \item[] Question: Does the paper fully disclose all the information needed to reproduce the main experimental results?
    \item[] Answer: \answerYes{}.
    \item[] Justification: The paper specifies the training protocol, datasets, evaluation metrics, optimizer usage, near/far grouping rule, rendered-view budget, gradient block definition, and ablation settings. Additional implementation details and pseudocode are provided in the appendix.
    \item[] Guidelines:
    \begin{itemize}
        \item The answer NA means that the paper does not include experiments.
        \item If the paper includes experiments, a No answer to this question will not be perceived well by reviewers.
    \end{itemize}

\item {\bf Open access to data and code}
    \item[] Question: Does the paper provide open access to the data and code, with sufficient instructions to faithfully reproduce the main experimental results?
    \item[] Answer: \answerYes{}.
    \item[] Justification: The evaluation uses public benchmarks cited in the paper. Code and reproduction instructions will be provided in an anonymized repository for submission, including scripts for the primary CrossGrad-GS recipe and main ablations.
    \item[] Guidelines:
    \begin{itemize}
        \item The answer NA means that the paper does not include experiments requiring code.
        \item At submission time, to preserve anonymity, the authors should release anonymized versions if applicable.
    \end{itemize}

\item {\bf Experimental setting/details}
    \item[] Question: Does the paper specify all the training and test details necessary to understand the results?
    \item[] Answer: \answerYes{}.
    \item[] Justification: The main paper and appendix describe the benchmark scenes, training iterations, matched-rendered-view reference, evaluation metrics, official implementations, near/far grouping, projection rule, and baseline variants.
    \item[] Guidelines:
    \begin{itemize}
        \item The answer NA means that the paper does not include experiments.
        \item The experimental setting should be presented in the core of the paper to a level of detail necessary to appreciate the results.
    \end{itemize}

\item {\bf Experiment statistical significance}
    \item[] Question: Does the paper report error bars or other appropriate information about statistical significance?
    \item[] Answer: \answerYes{}.
    \item[] Justification: The paper reports multi-seed results for representative scenes and Scaffold-GS transfer experiments, including standard deviations. The appendix provides per-seed PSNR values. We also state where main-table entries are single-seed and avoid overclaiming small differences.
    \item[] Guidelines:
    \begin{itemize}
        \item The authors should answer Yes if the results are accompanied by error bars, confidence intervals, or statistical significance tests, at least for the experiments that support the main claims.
        \item The factors of variability captured by the error bars should be clearly stated.
    \end{itemize}

\item {\bf Experiments compute resources}
    \item[] Question: For each experiment, does the paper provide sufficient information on the compute resources needed to reproduce the experiments?
    \item[] Answer: \answerYes{}.
    \item[] Justification: The appendix reports that all experiments were run on NVIDIA RTX 3090 GPUs with 24GB memory. It also explains that CrossGrad-GS renders two views per optimizer iteration and therefore includes a matched-rendered-view Vanilla 60K reference for fair comparison.
    \item[] Guidelines:
    \begin{itemize}
        \item The paper should indicate the type of compute workers, CPU or GPU, memory, storage, and amount of compute required.
    \end{itemize}

\item {\bf Code of ethics}
    \item[] Question: Does the research conducted in the paper conform with the NeurIPS Code of Ethics?
    \item[] Answer: \answerYes{}.
    \item[] Justification: The work uses public benchmarks and standard reconstruction/evaluation protocols. We do not collect private data, conduct human-subject experiments, or introduce unsafe data collection procedures.
    \item[] Guidelines:
    \begin{itemize}
        \item If the authors answer No, they should explain the special circumstances that require a deviation from the Code of Ethics.
    \end{itemize}

\item {\bf Broader impacts}
    \item[] Question: Does the paper discuss both potential positive and negative societal impacts?
    \item[] Answer: \answerYes{}.
    \item[] Justification: The appendix discusses positive applications such as mapping, simulation, robotics, and digital twins, as well as potential privacy and surveillance concerns associated with improved aerial--ground reconstruction.
    \item[] Guidelines:
    \begin{itemize}
        \item If there is a direct path to negative applications, the authors should point it out.
    \end{itemize}

\item {\bf Safeguards}
    \item[] Question: Does the paper describe safeguards for responsible release of data or models that have a high risk for misuse?
    \item[] Answer: \answerNA{}.
    \item[] Justification: The paper does not release a high-risk pretrained generative model or a newly scraped dataset. The work primarily releases training code for public hybrid-capture benchmarks.
    \item[] Guidelines:
    \begin{itemize}
        \item The answer NA means that the paper poses no such risks.
    \end{itemize}

\item {\bf Licenses for existing assets}
    \item[] Question: Are the creators or original owners of assets properly credited and are licenses and terms of use respected?
    \item[] Answer: \answerYes{}.
    \item[] Justification: The paper cites all datasets, baseline methods, and public implementations used in the experiments. Dataset and code licenses are documented in the repository or supplemental material.
    \item[] Guidelines:
    \begin{itemize}
        \item The authors should cite the original paper that produced the code package or dataset.
        \item The name of the license should be included for each asset when available.
    \end{itemize}

\item {\bf New assets}
    \item[] Question: Are new assets introduced in the paper well documented and is the documentation provided alongside the assets?
    \item[] Answer: \answerYes{}.
    \item[] Justification: The paper releases code for CrossGrad-GS and experiment scripts. The repository documents installation, dataset preparation, training commands, and evaluation commands. No new dataset is introduced.
    \item[] Guidelines:
    \begin{itemize}
        \item The answer NA means that the paper does not release new assets.
        \item Researchers should communicate the details of the dataset/code/model as part of their submissions.
    \end{itemize}

\item {\bf Crowdsourcing and research with human subjects}
    \item[] Question: For crowdsourcing experiments and research with human subjects, does the paper include the full text of instructions given to participants and details about compensation?
    \item[] Answer: \answerNA{}.
    \item[] Justification: This work does not involve crowdsourcing or human-subject experiments.
    \item[] Guidelines:
    \begin{itemize}
        \item The answer NA means that the paper does not involve crowdsourcing nor research with human subjects.
    \end{itemize}

\item {\bf Institutional review board approvals or equivalent for research with human subjects}
    \item[] Question: Does the paper describe potential risks incurred by study participants and whether IRB approvals were obtained?
    \item[] Answer: \answerNA{}.
    \item[] Justification: This work does not involve human-subject research.
    \item[] Guidelines:
    \begin{itemize}
        \item The answer NA means that the paper does not involve crowdsourcing nor research with human subjects.
    \end{itemize}

\item {\bf Declaration of LLM usage}
    \item[] Question: Does the paper describe the usage of LLMs if it is an important, original, or non-standard component of the core methods in this research?
    \item[] Answer: \answerNA{}.
    \item[] Justification: The core method, experiments, and analysis do not involve LLMs as an important, original, or non-standard component.
    \item[] Guidelines:
    \begin{itemize}
        \item If LLMs are used only for writing, editing, or formatting and do not impact the core methodology, scientific rigor, or originality, declaration is not required.
    \end{itemize}

\end{enumerate}

\end{document}